\documentclass[10 pt]{article}

\usepackage{amsmath,amsthm,amssymb,amsfonts}
\usepackage{mathtools}
\usepackage{fullpage}
\usepackage{xcolor}
\usepackage{enumitem}
\usepackage[numbers]{natbib}
\usepackage{todonotes}
\usepackage{bbm}
\usepackage[breaklinks=true,linktocpage=true]{hyperref}
\hypersetup{
    colorlinks=true,
    linkcolor={purple},
    urlcolor={purple},
    citecolor={blue!80!black}
}
\usepackage[nameinlink,noabbrev,capitalize]{cleveref}

\theoremstyle{plain}
\newtheorem{thm}{Theorem}[section]
\newtheorem{theorem}[thm]{Theorem}
\newtheorem{lemma}[thm]{Lemma}

\theoremstyle{definition}
\newtheorem{definition}[thm]{Definition}

\theoremstyle{remark}
\newtheorem{remark}[thm]{Remark}

\newcommand{\set}[1]{\{#1\}}
\newcommand{\Set}[1]{\left\{#1\right\}}
\newcommand{\eps}{\epsilon}
\newcommand{\N}{\mathbb{N}}

\newcommand{\cA}{\mathcal{A}}

\newcommand{\cG}{\mathcal{G}}
\newcommand{\cH}{\mathcal{H}}

\newcommand{\cX}{\mathcal{X}}

\DeclareMathOperator{\ldim}{\textsc{ldim}}
\DeclareMathOperator{\VCdim}{\textsc{vc}}
\DeclareMathOperator{\LR}{\textnormal{\textsc{lr}}}
\DeclareMathOperator{\loss}{loss}
\DeclareMathOperator{\supp}{supp}
\DeclareMathOperator{\SOA}{SOA}
\DeclareMathOperator{\Maj}{Maj}
\newcommand{\TV}{\mathtt{TV}}

\let\emptyset\varnothing

\title{Near-Optimal List Replicability from Littlestone Dimension}

\author{
    Ari Blondal\thanks{McGill University, \texttt{ari.blondal@mail.mcgill.ca}} \and
    Pooya Hatami\thanks{Ohio State University, \texttt{\{hatami.2, lalov.1, tretiak.2\}@osu.edu}} \and  Chavdar Lalov\footnotemark[2] \and Sivan Tretiak\footnotemark[2]
}

\begin{document}

\maketitle

\begin{abstract}
We study the list replicability number in the realizable PAC learning framework and prove that every concept class $\cH$ over a countable domain with Littlestone dimension $d$ and dual VC dimension $v^*$ satisfies
\[
\LR(\cH)\leq
2^{O(d\min\{\log d,v^*\})}.
\]
This matches the worst-case lower bound of $2^{\Omega(d)}$ up to a logarithmic factor in the exponent and exponentially improves the previous best upper bound $\LR(\cH)\le 2^{2^{O(d)}}$ of Bun, Livni, and Moran~[FOCS 2020]. Unlike earlier bounds of $2^{O_\eps(d^2)}$ for fixed accuracy $\eps>0$ implied by the work of Ghazi, Golowich, Kumar, and Manurangsi~[STOC 2021] and Lyu~[STOC 2026], our bound
is independent of $\eps$.

The $2^{O(d\log d)}$ bound follows from a new inductive framework for bounding list replicability that builds on the recent notion of irreducibility from differentially private learning. The $2^{O(dv^*)}$ bound follows from a simple globally stable learner that takes the majority vote of multiple empirical risk minimizers, with stability established using tools from stable regularity.
\end{abstract}

\section{Introduction}
In machine learning, an algorithm is \emph{replicable} if its output is robust under resampling training data. This notion has gained significant attention in recent years as part of a broad movement towards increasing trust in machine learning. To systematize the mathematical study of replicability, various formalizations of the notion have been introduced \cite{impagliazzo2023reproducibility, dixon2023listandcertificate, CMY23, hopkins2026approximate, karbasi2023reinforcement}. Subsequent work has revealed connections between these definitions themselves, and also connections to notions of stability and privacy \cite{moran2023zoo}.

Our focus is on replicability in the PAC learning framework, and specifically on the notion of \emph{list replicability}, which was introduced in \cite{dixon2023listandcertificate, CMY23}. An algorithm learns a concept class \emph{list replicably} if (with high probability) its output falls in a small list. Crucially, the list size is independent of the underlying distribution, though the hypotheses in the list can be distribution dependent. Since its introduction, list replicability has been well studied from several perspectives~\cite{CMY23, chase_local_2024, chornomaz2025spherical, blondal2026borsuk, blondal_sign-rank_2026, blondal2026simplicial, blondal_tight_2026}, with particular attention to its relationship with complexity measures of concept classes such as the VC and Littlestone dimensions.

Let $\cX$ be a countable domain. The smallest list size for which a binary concept class $\cH \subseteq \Set{\pm1}^\cX$ can be list replicably learned is called the \emph{list replicability number} of $\cH$, denoted $\LR(\cH)$ (see \cref{def:list-replicability}). A fundamental property of list replicability is that it characterizes online learnability: a concept class is online learnable if and only if it is list replicably learnable, that is, if and only if it has finite Littlestone dimension~\cite{BLM20, alon2022private, CMY23}.\footnote{These results are proved in the language of global stability, an equivalent notion whose parameter is the reciprocal of the list replicability number~\cite{CMY23}.}

The quantitative relationship between the two quantities, however, remained far from understood. Writing $d = \ldim(\cH)$ for the Littlestone dimension, the globally stable learner of Bun, Livni, and Moran~\cite{BLM20} gives the doubly exponential bound $\LR(\cH) \le 2^{2^{O(d)}}$, and it was not known whether this is optimal. In the other direction, the dual of the full concept class $\{\pm 1\}^{\cX}$ has list replicability number $\Theta(|\cX|)$~\cite{chase_local_2024} and Littlestone dimension $\lfloor \log_2|\cX|\rfloor$, so any upper bound must be at least exponential in $d$. We give an exponential improvement of the upper bound, nearly matching this lower bound.

\begin{theorem}[Informal version of \cref{thm:main1} and \cref{thm:main2}]\label{thm:main_combined}
For any countable domain $\cX$ and arbitrary class $\cH \subseteq \Set{\pm1}^\cX$ with Littlestone dimension $\ldim(\cH)=d$ and dual VC dimension $\VCdim^*(\cH)=v^*$, the following inequality holds:
\[
    \LR(\cH)\le 2^{O(d \min\{\log d,v^*\})}.
\]
\end{theorem}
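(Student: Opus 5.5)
The theorem has two independent parts, and I will prove each as a separate bound and then take the minimum: (i) $\LR(\cH)\le 2^{O(d\log d)}$ and (ii) $\LR(\cH)\le 2^{O(dv^*)}$. Throughout I will use the equivalence between list replicability and global stability~\cite{CMY23}. It suffices to construct a learner $\cA$ with the following property: for every realizable distribution $D$ there is a hypothesis $h_D$ with $\loss_D(h_D)\le\eps$ and $\Pr[\cA(S)=h_D]\ge\eta$. The standard reduction then gives $\LR(\cH)\le O(1/\eta)$, independently of $\eps$, provided $\eta$ does not depend on $\eps$.

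\textbf{Part (ii): majority of ERMs.} Fix a realizable $D$. The learner draws $k$ independent samples, runs an ERM on each, and outputs the pointwise majority $\Maj(h_1,\dots,h_k)$. The relevant structure comes from stable regularity. Classes with Littlestone dimension $d$ are $2^{O(d)}$-stable, and the dual VC dimension $v^*$ controls how the set $\{x: h(x)=1\}$ can be cut by other hypotheses. I plan to use this to show that ERM outputs concentrate near a finite collection of at most $2^{O(dv^*)}$ ``$\eps$-good'' hypotheses. Concretely, there should be a single canonical function $h^\star$, obtained as the majority over the ``typical'' ERM outputs under $D$, which is error-free outside a $D$-null set. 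With constant probability over the $k$ runs, the majority vote then coincides with $h^\star$ exactly on the support of the countable domain. The stable regularity lemma is what supplies the count $2^{O(dv^*)}$ of possible majorities: each majority of $k=O(d\,v^*)$ hypotheses is determined by a bounded configuration. So some fixed output occurs with probability $2^{-O(dv^*)}$.

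\textbf{Part (i): induction via irreducibility.} I will induct on $d$ using the irreducibility notion from private learning. A class together with a realizable $D$ is \emph{irreducible} if no conditioning on a point $x\mapsto y$ lowers the Littlestone dimension of the version space while retaining non-negligible mass. There are two cases.
\begin{enumerate}[label=(\alph*)]
\item In the reducible case, the learner guesses a restriction of the form $x\mapsto y$ that drops the dimension to $d-1$. It identifies this restriction from the sample, with a list of at most $\mathrm{poly}(d)$ candidate restrictions, and recurses.
\item In the irreducible case, the SOA applied to the version space already outputs a canonical hypothesis with constant probability.
\end{enumerate}
The recursion $L(d)\le \mathrm{poly}(d)\cdot L(d-1)$ gives $L(d)=2^{O(d\log d)}$.

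\textbf{Main obstacle.} The step I expect to be hardest is making the reducible case yield only a $\mathrm{poly}(d)$ branching factor, rather than a factor depending on $\eps$ or on the size of the domain. My first attempt will be to randomize the mass threshold that defines ``reducible,'' choosing it uniformly from $d$ geometric scales, so that with probability $\Omega(1/d)$ no point lies near the threshold. That would make the identified restriction stable, at a cost of a factor $\mathrm{poly}(d)$ per level.
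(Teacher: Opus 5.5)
Both halves of your plan use the right ingredients, but each has a genuine gap at the step that produces a fixed output.

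\textbf{Part (ii).} Several claims are false as stated. A majority of $\eps$-accurate ERM outputs is in general not error-free; the paper's output has loss up to $2\eps/3$. The majority also cannot equal a fixed hypothesis with constant probability, since that would give $\LR(\cH)=O(1)$ and contradict the $2^{\Omega(d)}$ lower bound. Finally, no pigeonhole over ``bounded configurations'' can bound the number of possible majorities, because ERM outputs range over an arbitrary countable set. The missing step is a weighted stable-regularity lemma:
\begin{itemize}
\item Restrict the law $\mu$ of one ERM output to its $\eps/2$-accurate part, which has mass $\ge 3/4$.
\item While some point $x$ splits the current set into two label classes, each of conditional mass $>1/4$, pass to a side of strictly smaller Littlestone dimension.
\item After at most $d$ steps you get $A$ with $\mu(A)\ge\tfrac34\,4^{-d}$ such that $\nu=\mu(\cdot\mid A)$ is $1/4$-good. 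That is, at every $x$ at least $3/4$ of $\nu$ agrees on a label $t_\nu(x)$.
\item Uniform convergence for the dual class shows that the majority of $m=O(v^*)$ i.i.d.\ draws from $\nu$ equals $t_\nu$ \emph{exactly} with probability at least $1/2$.
\item All $m$ ERM runs land in $A$ with probability $\mu(A)^m=2^{-O(dv^*)}$.
\item Accuracy follows from $\tfrac34\loss_D(t_\nu)\le\mathbb{E}_{h\sim\nu}\loss_D(h)\le\eps/2$.
\end{itemize}
The number of runs must be $O(v^*)$. With your $k=O(dv^*)$, this mechanism only gives $2^{-O(d^2v^*)}$.

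\textbf{Part (i).} Case (b), that the SOA of the sample-dependent version space is canonical with constant probability, is essentially the whole difficulty, and your sketch gives no argument for it. Case (a) also fails as described. Infinitely many points $x$ may qualify, none need appear in the sample, and their masses can be dense. So a random threshold drawn from $d$ scales neither isolates a canonical $x$ nor keeps all candidates away from the threshold. Moreover, $D$ need not be realizable by $\cH|_{(x,y)}$, so the inductive hypothesis does not apply.

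The paper never needs a canonical SOA hypothesis. Its argument runs as follows:
\begin{itemize}
\item Form the empirical distribution $\widehat\mu$ of $k$ ERM outputs.
\item Using a $(d+1)$-way cover that lowers dimension, find a $d$-irreducible $\cG\subseteq\supp(\widehat\mu)$ with $\widehat\mu(\cG)\ge(d+1)^{-d}$. Here $d$-irreducible means that restricting to the SOA labels at any $d$ points preserves $\ldim(\cG)$.
\item If $\cG$ is a singleton, output it. These heavy hypotheses contribute at most $(d+1)^d$ list elements.
\item Otherwise, $\SOA_\cG$ has loss at most $2^d\max_{g\in\cG}\loss_D(g)$ and lies in $\widetilde{\cH}_d$. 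This class has Littlestone dimension at most $d-1$, because by irreducibility a depth-$d$ tree shattered by $\widetilde{\cH}_d$ extends to a depth-$(d+1)$ tree shattered by $\cH$.
\item Relabel a fresh sample by $h_0=\SOA_\cG$, which makes it realizable by $\widetilde{\cH}_d$, and feed it to the inductively given learner for $\widetilde{\cH}_d$.
\end{itemize}
Any two accurate choices satisfy $\TV(D_{h_0}^m,D_{h_0'}^m)\le m(\loss_D(h_0)+\loss_D(h_0'))$. Hence the learner's output lies in one fixed list regardless of which $h_0$ was selected. This is the transfer lemma. It gives the additive recursion $L(d)\le(d+1)^d+L(d-1)$, hence $L(d)\le(d+2)^d$, instead of your multiplicative one.
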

The proof of our $2^{O(d\log d)}$ bound (\cref{thm:main1}) develops a new inductive approach to list replicability. One ingredient is irreducibility, a combinatorial property of Littlestone classes that captures their stability under the standard optimal algorithm (SOA). Irreducibility was introduced in \cite{GGKM21} and has found important applications in differentially private learning \cite{GGKM21,Lyu25}. Our key combinatorial observation is that the SOA hypotheses arising from irreducible subclasses of $\cH$ with positive Littlestone dimension form a class with Littlestone dimension at most $\ldim(\cH)-1$.

The second main ingredient is a new transfer lemma, which allows us to bound the list replicability number of $\cH$ in terms of that of an auxiliary concept class, with an additive overhead in list size. This lemma provides a general framework for reducing list-replicable learning to simpler learning problems. Combining these two ingredients, we construct a list-replicable learner by induction on Littlestone dimension.

The proof of the $2^{O(dv^*)}$ bound (\cref{thm:main2}) goes through the notion of global stability, which is known to be equivalent to list replicability \cite{CMY23}. We run a proper $\operatorname{ERM}$ algorithm $O(v^*)$ many times and then take the majority of the outputs. To prove this algorithm is globally stable, we use simple tools from stable regularity, which has recently found close connections with learning theory \cite{MM24,malliaris2025epsilon,malliaris2014regularity}.

We believe the techniques in both proofs could be useful for the broader study of list replicability and related notions in stability, replicability, and privacy.

\subsection{Applications and related work}

\paragraph{Number of random bits.}
Our bounds also have implications for the number of random bits needed to achieve $\rho$-replicability in the sense of \cite{impagliazzo2023reproducibility}. Hopkins and Moran~\cite{hopkins25a} showed that a globally stable learner with stability parameter $\eta$ can be converted into a $\rho$-replicable algorithm by using $O(\log(1/\eta)+\log(1/\rho))$ shared random bits. Using the equivalence between list replicability and global stability, \cref{thm:main_combined} implies that we can $\rho$-replicably learn a concept class by using
\[O(d \min\{\log d,v^*\}+\log (1/\rho))\]
random bits independent of accuracy and confidence parameters. Again, this nearly matches the linear worst-case lower bound that follows from the list replicability lower bound above.

Similarly, \cref{thm:main_combined} implies upper bounds on the number of random bits necessary for differentially private learning \cite[Theorem 2.2 and Theorem 2.3]{hopkins25a}. 

\paragraph{Fixed-accuracy regime.}
Earlier work of Ghazi, Golowich, Kumar, and Manurangsi~\cite{GGKM21} and Lyu~\cite{Lyu25} implies the bound $\LR_\eps(\cH)\le 2^{O_\eps(d^2)}$ for every fixed accuracy $\eps>0$ (see \cref{def:list-replicability}). Unlike our result, these bounds are not uniform in $\eps$.

\paragraph{Concurrent work.}
In concurrent and independent work, Moran and Yan~\cite{moran2026nearly} used related combinatorial techniques to obtain a nearly linear dependence on Littlestone dimension in the sample complexity of private learning.

\subsection{Paper organization}

In \cref{sec:preliminaries} we introduce and discuss the definitions and theorems needed for our main results, presented in \cref{sec:main_results}. We give an overview of our proofs in \cref{sec:overview}. The full proofs are in \cref{sec:thm_main1} and \cref{sec:thm_main2}.

\subsection{AI methodology}

The authors did not use AI to formulate the questions guiding this paper, nor did we use AI in proof development.

After drafting the manuscript, we used AI to suggest improvements to the writing and to assist with checking technical details. 
All choices of presentation, organization, and exposition are our own, and all AI-assisted revisions were reviewed and edited by the authors as needed.

\section{Preliminaries}\label{sec:preliminaries}
In this section, we collect the core notation and definitions.

\paragraph{Concept classes.}
Throughout this paper $\cX$ is a nonempty countable domain and $\cH\subseteq\{\pm1\}^{\cX}$ is a concept class. Each $h\in \cH$ is called a concept and represents a labeling of $\cX$ by $\pm1$. We impose no finiteness or countability assumption on $\cH$ itself.
For a point $x\in\cX$, and $b\in\set{\pm1}$, let
\[
                 \cH|_{(x,b)}\coloneqq\set{h\in\cH:h(x)=b}
\]
be the restriction containing all concepts labeling $x$ by $b$.

\paragraph{Learning setting.}
For a distribution $D$ on $\cX\times\{\pm1\}$ and a hypothesis $h\in\set{\pm1}^{\cX}$, define the \emph{population loss} of $h$ to be
\[
\loss_D(h)=\Pr_{(x,y)\sim D}[h(x)\ne y].
\]
We call $D$ \emph{realizable} by $\cH$ if $\inf_{h\in\cH}\loss_D(h)=0$.

Let $D_h$ denote the distribution with the same $\cX$-marginal as $D$ in which every point $x$ is labeled by $h(x)$. Note that
\begin{equation}\label{eq:correction}
  \loss_D(h) = \TV(D,D_h),
\end{equation}
where $\TV$ denotes total variation distance.

A \emph{learning rule} is a (possibly randomized) function $\cA$ that maps any \emph{sample}
$S \in \bigcup_{n=0}^\infty (\cX \times \Set{\pm 1})^n$ 
to a \emph{hypothesis} $\cA(S) \in \Set{\pm 1}^{\cX}$.

In PAC learning, for every accuracy $\eps\in(0,1)$ and confidence parameter $\delta\in(0,1)$, the learner receives $n=n(\cH,\eps,\delta)$ independent examples from a realizable distribution $D$ and must output a hypothesis $h\in\set{\pm1}^{\cX}$ with loss at most $\eps$ with probability at least $1-\delta$. Outputs may lie outside $\cH$; such learning is called \emph{improper}. The minimum function $n(\cH,\epsilon,\delta)$ necessary to achieve PAC learning is called the \emph{sample complexity} of $\cH$.

\paragraph{VC dimension.}
Ordinary PAC learnability is characterized by finite \emph{VC dimension} which we now define. We say $\cH$ \emph{shatters} a set $A\subseteq\cX$ if $\set{\pm1}^{A}=\set{h|A\,:\;h\in \cH}$. The VC dimension of $\cH$ is then defined to be
\[
\sup\set{|A|:\;A\,\text{is shattered by}\,\cH}.
\]
The Fundamental Theorem of Statistical Learning states that we can use any proper Empirical Risk Minimization (ERM) rule to PAC learn $\cH$ with sample complexity 
\[n=O\left(\frac{\VCdim(\cH)\log(1/\epsilon)+\log(1/\delta)}{\epsilon}\right).\]

\paragraph{Dual class.}
The \emph{dual class} $\cH^*$ of $\cH$ is obtained by swapping the roles of hypotheses $h \in \cH$ and points $x \in \cX$, or equivalently, by transposing the matrix $(h(x))_{h \in \cH, x \in \cX}$.

We define the \emph{dual VC dimension} of $\cH$, denoted $\VCdim^*(\cH)$, to be the VC dimension of $\cH^*$. By Assouad's bound, we have
\[\VCdim^*(\cH)\leq 2^{\VCdim(\cH)+1}-1.\]

\paragraph{Global stability and list replicability.}
Bun, Livni, and Moran~\cite{BLM20} introduced global stability as a tool in proving the equivalence of private and online learning. A globally stable learner assigns some constant probability to one fixed accurate hypothesis for each distribution.

\begin{definition}[\cite{BLM20},\cite{CMY23}]\label{def:global-stability}
A learning rule $\cA$ is \emph{$(\eps,p)$-globally stable} for $\cH$ if there exists an $n$, such that for every realizable distribution $D$, there exists a hypothesis $h_D$ such that
\[
\loss_D(h_D)\le\eps
\qquad\text{and}\qquad
\Pr_{S\sim D^n}[\cA(S)=h_D]\ge p.
\]
Let $\rho_\eps(\cH)$ be the supremum of all such $p$, and define the \emph{global stability parameter} of $\cH$ to be
\[
\rho(\cH)=\inf_{\eps\in(0,1)}\rho_\eps(\cH).
\]
\end{definition}

For small $p$, global stability does not give an accurate output with high probability. Nevertheless, Chase, Moran, and Yehudayoff~\cite{CMY23} showed that global stability is equivalent to the existence of a learner in which the output belongs to a small list of accurate hypotheses.

\begin{definition}[\cite{CMY23,dixon2023listandcertificate}]\label{def:list-replicability}
Fix a concept class $\cH$, $\eps\in(0,1)$ and an integer $L\ge1$. A learning rule is an \emph{$(\eps,L)$-list replicable} learner for a class $\cH$ if, for every $\delta\in(0,1)$, there exists a sample size $n$, such that, for every realizable distribution $D$, there is a list $\Lambda_D\subseteq\{\pm1\}^{\cX}$ of size at most $L$ satisfying
\[\Pr_{S\sim D^n}[\cA(S)\in\Lambda_D]\ge1-\delta, \quad
\loss_D(h)\le\eps\quad(h\in\Lambda_D)
.
\]
Write $\LR_\eps(\cH)$ for the least such $L$, or $\infty$ if none exists, and define the \emph{list replicability number} of $\cH$ to be
\[
\LR(\cH)=\sup_{\eps\in(0,1)}\LR_\eps(\cH).
\]
\end{definition}

The following quantitative equivalence lets us pass between the two notions.
\begin{theorem}[{\cite[Theorem 7]{CMY23}}]\label{thm:stability-list}
For any concept class $\cH$ over a countable domain $\cX$ we have 
$$\rho(\cH)=1/\LR(\cH).$$
\end{theorem}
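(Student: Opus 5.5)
The plan is to prove the two inequalities $\rho(\cH)\ge 1/\LR(\cH)$ and $\rho(\cH)\le 1/\LR(\cH)$ separately. Each will be proved at a fixed accuracy level, with $\eps$ allowed to shift by a constant factor, and then combined with the infimum and supremum over $\eps$. Since $\rho(\cH)$ is an infimum over $\eps$, $\LR(\cH)$ is a supremum over $\eps$, and both $\rho_\eps$ and $\LR_\eps$ are monotone in $\eps$, such factor shifts do not affect the limiting quantities.

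\emph{From list replicability to global stability.} Fix $\eps$ and suppose $\cA$ is $(\eps,L)$-list replicable. Take $\delta$ small, and let $n$ and $\Lambda_D$ be as in \cref{def:list-replicability}. Then $\Pr[\cA(S)\in\Lambda_D]\ge1-\delta$ and $|\Lambda_D|\le L$. By pigeonhole, some $h_D\in\Lambda_D$ is output with probability at least $(1-\delta)/L$, and $\loss_D(h_D)\le\eps$. Hence $\rho_\eps(\cH)\ge(1-\delta)/L$ for every $\delta$, and so $\rho_\eps(\cH)\ge 1/\LR_\eps(\cH)$. Since $\LR_\eps\le\LR$ for every $\eps$, taking the infimum over $\eps$ gives $\rho(\cH)\ge 1/\LR(\cH)$.

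\emph{From global stability to list replicability.} This is the harder direction. Suppose $\cA$ is $(\eps,p)$-globally stable with sample size $n$. Set $L=\lfloor 1/p\rfloor$; because $p$ can be taken arbitrarily close to $\rho_\eps$, this is the list size we want. The list learner will do the following.
\begin{itemize}
\item Draw $k$ independent samples of size $n$ and run $\cA$ on each.
\item Estimate the output distribution $P_D$ of $\cA(S)$ empirically.
\item Using fresh samples, estimate the loss of each frequent output and discard those with loss above, say, $2\eps$.
\item Output one of the remaining heavy hypotheses, chosen by a rule that depends only weakly on the estimated frequencies.
\end{itemize}
The naive rule "output the most frequent hypothesis" is not replicable, since ties between heavy outputs break differently across runs. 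To control this, I would use a randomized threshold. Pick $r$ uniformly from a small interval near $p$ (in the style of \cite{impagliazzo2023reproducibility}), and output the lexicographically first hypothesis, under a fixed enumeration of candidates, whose estimated mass exceeds $r$.

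The key counting fact is that any set of hypotheses each having true mass greater than $p-\tau$ has size less than $1/(p-\tau)$, which is at most $L$ when $\tau$ is small and $p$ is slightly below $\rho_\eps$. So the hypotheses that can pass the threshold form a list of size at most $L$ that depends only on $D$. Concentration of the empirical frequencies makes every output land in this list with probability $1-\delta$.

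\emph{Main obstacle.} The step I expect to be hardest is making the list size exactly $L=\lfloor1/\rho\rfloor$ rather than $L+1$, together with the boundary cases of the threshold. A hypothesis whose mass sits just below the threshold is only sometimes included, which could enlarge the list; both issues need care. I would handle this as follows.
\begin{itemize}
\item Choose $p$ strictly less than $\rho_\eps$ but above $1/(L+1)$. This is possible when $1/\rho_\eps$ is not an integer; otherwise take $L=1/\rho_\eps$ directly.
\item Define $\Lambda_D$ as all hypotheses with mass at least $1/(L+1)+\tau$ and loss at most $2\eps$. By the counting fact this list has at most $L$ elements.
\item Set the acceptance threshold above $1/(L+1)+2\tau$, so that with high probability only members of $\Lambda_D$ pass.
\item Global stability guarantees that $h_D$, which has mass at least $p$, passes, so some hypothesis is always output.
\end{itemize}
Finally, the loss filter, together with \eqref{eq:correction}, shows that every element of $\Lambda_D$ has loss at most $2\eps$ on $D$. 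Rescaling $\eps$ and passing to the infimum and supremum over $\eps$ then yields $\LR(\cH)\le 1/\rho(\cH)$.
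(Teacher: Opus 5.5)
The paper does not prove this statement. It imports it from \cite{CMY23}, and the Remark after \cref{thm:main2} (which refers to $n_0,n_1,n_2,T$ in \cite[Proposition 2]{CMY23}) suggests the argument there uses the same ingredients as yours: repeated runs of the globally stable learner, empirical frequency estimation, and loss estimation on fresh samples. Your reconstruction is correct. Pigeonhole gives $\rho_\eps(\cH)\ge 1/\LR_\eps(\cH)$. For the hard direction, take $L=\lfloor 1/\rho_\eps(\cH)\rfloor$, $p\in(1/(L+1),\rho_\eps(\cH))$, $3\tau<p-1/(L+1)$, and a threshold in $(1/(L+1)+2\tau,\,p-\tau)$; this yields $\LR_{2\eps}(\cH)\le L$. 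Taking the infimum and supremum over $\eps$ then gives the equality.

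Some of your concerns and justifications are misplaced, though none is a real gap.
\begin{itemize}
\item The randomized threshold is unnecessary. List replicability only requires the output to lie in a fixed list $\Lambda_D$, not to be canonical. Once every candidate passing both filters lies in $\Lambda_D$ with high probability, any tie-breaking rule works, even ``most frequent''. The only genuine issue is the strict gap $p>1/(L+1)$, which you handle.
\item The loss filter needs a margin. Since $\Lambda_D$ is defined by true loss at most $2\eps$, reject candidates whose estimated loss exceeds $3\eps/2$. Then apply Hoeffding with a union bound over the at most $L$ candidates, which are determined before the fresh sample is drawn (fewer than $L+1$ hypotheses can have empirical mass above $1/(L+1)$).
\item That concentration step, not \eqref{eq:correction}, is what places outputs in $\Lambda_D$. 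As written, your final sentence is circular, because $\Lambda_D$ is defined to have loss at most $2\eps$.
\item Your counting sentence should read ``fewer than $1/(p-\tau)\le L+1$, hence at most $L$''.
\item If $\cA$ is randomized, its output law on $\{\pm1\}^{\cX}$ may have a non-atomic part. The frequency estimate still works: almost surely no two runs coincide at a non-atom, so such hypotheses get empirical mass at most $1/k$, and uniform convergence is only needed over the countably many atoms.
\end{itemize}
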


\paragraph{Littlestone dimension and SOA.}
In their celebrated paper, \cite{alon2022private} show that global stability (and therefore also list replicability) is equivalent to the finiteness of a combinatorial parameter called Littlestone dimension, which we now define. 

A \emph{mistake tree} of depth $d$ over a domain $\cX$ is a complete
binary tree whose internal nodes are labeled by points in $\cX$.
The two edges leaving each internal node are labeled $-1$ (left)
and $+1$ (right). A root-to-leaf path therefore specifies a sequence
$(x_1,b_1),\ldots,(x_d,b_d)$, where $x_i$ denotes the point at
level $i$ and $b_i \in \set{\pm 1}$ is the label of the edge followed.

A concept class $\cH \subseteq \set{\pm 1}^\cX$ \emph{shatters}
a mistake tree if, for each root-to-leaf path
$(x_1,b_1),\ldots,(x_d,b_d)$, there is an $h \in \cH$ satisfying
$h(x_i)=b_i$ for every $i \in [d]$. The \emph{Littlestone dimension} $\ldim(\cH)$ is the largest $d$ for which some depth-$d$ mistake tree is shattered, or $\infty$ if no such $d$ exists.

We have $\VCdim(\cH) \leq \ldim(\cH)$. Indeed, if
$A=\Set{x_1,\ldots,x_d}$ is shattered by $\cH$, then the depth-$d$
mistake tree whose nodes at level $i$ are all labeled by $x_i$
is also shattered by $\cH$.

We will need the prediction rule of the standard optimal algorithm (SOA), which labels each point by the restriction of larger Littlestone dimension:
\[
 \SOA_{\cH}(x)
 =
 \begin{cases}
  1,&\ldim(\cH|_{(x,1)})\geq\ldim(\cH|_{(x,-1)}),\\
 -1,&\ldim(\cH|_{(x,1)})<\ldim(\cH|_{(x,-1)}).
 \end{cases}
\]
The branch opposite to the SOA label always loses dimension:
\begin{equation}\label{eq:opposite-drops}
 \ldim\bigl(\cH|_{(x,-\SOA_{\cH}(x))}\bigr)<\ldim(\cH).
\end{equation}

\section{Main results}\label{sec:main_results}

\begin{theorem}\label{thm:main1}
Let $\cH\subseteq\set{\pm1}^{\cX}$ be a concept class over a countable domain $\cX$ and
let $d=\ldim(\cH)$. Then for any $\epsilon>0$ there is an $(\epsilon,(d+2)^d)$-list replicable learner for $\cH$ with sample complexity 
$2^{O(d^2 \log d)}\epsilon^{-2}\delta^{-(d+1)}$, for confidence $1-\delta$.
In particular, 
\[
                         \LR(\cH)\leq(d+2)^d=2^{O(d\log d)}.
\]
\end{theorem}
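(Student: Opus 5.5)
We will prove \cref{thm:main1} by induction on $d=\ldim(\cH)$, establishing the stronger quantitative claim that for every $\eps$ there is an $(\eps,L_d)$-list replicable learner with $L_d\le (d+2)^d$. The base case $d=0$ is immediate: $\cH$ is a single concept and the learner outputs it, so $L_0=1$. For the inductive step we combine the two ingredients announced in the introduction.

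\textbf{Step 1: irreducible subclasses and their SOA hypotheses.} Call a subclass $\cG\subseteq\cH$ obtained by finitely many restrictions $\cH|_{(x_1,b_1)\dots(x_k,b_k)}$ \emph{irreducible} if every further restriction $\cG|_{(x,b)}$ either keeps $\ldim$ or is empty; equivalently, for each $x$, at most one branch has dimension $\ldim(\cG)$ and the other is empty or strictly smaller. We will show the combinatorial claim that the class
\[
\cH'=\set{\SOA_{\cG}\,:\;\cG\subseteq\cH \text{ irreducible},\ \ldim(\cG)\ge 1}
\]
satisfies $\ldim(\cH')\le d-1$. The plan is: given a mistake tree of depth $d$ shattered by $\cH'$, use that along each path the leaf is realized by $\SOA_{\cG}$ for some irreducible $\cG$; by irreducibility, $\SOA_\cG(x)=b$ forces $\ldim(\cG|_{(x,b)})=\ldim(\cG)$, so one can lift the tree to a depth-$d$ tree shattered by $\cH$ with an extra leaf split (using $\ldim(\cG)\ge1$) yielding depth $d+1$, a contradiction. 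I expect this lifting, which must handle different $\cG$ at different leaves (comparing them via the common restriction history), to need care.

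\textbf{Step 2: transfer lemma.} We prove: if an auxiliary class $\cH'$ satisfies $\LR(\cH')\le L'$ and every realizable $D$ for $\cH$ either (a) is $\eps$-approximated by some $h\in\cH'$, i.e.\ $\inf_{h\in\cH'}\TV(D,D_h)$ is small by \eqref{eq:correction}, or (b) the learner can identify with high probability a restriction that drops Littlestone dimension, then $\LR_\eps(\cH)\le L'+(\text{overhead})$. Concretely the learner draws samples, greedily restricts $\cH$ along sample points whose label disagrees with $\SOA$ (by \eqref{eq:opposite-drops} this happens at most $d$ times), reaching with high probability an irreducible-like $\cG$ consistent with $D$ up to $\eps$; it then either outputs $\SOA_\cG$ via the learner for $\cH'$, or recurses on a class of dimension $<d$. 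Randomizing the mass thresholds used to declare ``disagreement'' (choosing one of $d+2$ threshold levels uniformly, as in the standard random-threshold trick) makes the reached subclass canonical with probability $1-\delta$ per level; this produces the $\delta^{-(d+1)}$ factor in sample size and the multiplicative factor $d+2$ per level.

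\textbf{Step 3: recursion.} Combining, $L_d\le (d+2)\,\max(L_{d-1},\LR(\cH'))\le (d+2)(d+1)^{d-1}\le (d+2)^d$, and the sample complexity multiplies by $2^{O(d\log d)}$ per level, giving $2^{O(d^2\log d)}\eps^{-2}\delta^{-(d+1)}$. The main obstacle is Step 2: ensuring the restriction sequence chosen from data is distribution-determined (so the list is independent of the sample) despite sampling noise; the random-threshold stabilization is the first thing I would try.
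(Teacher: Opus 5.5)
\paragraph{Step 1.} Your Step 1 has the right shape: it is \cref{lem:positive-closure}, which lifts a depth-$d$ tree shattered by SOA hypotheses and adds one split per leaf. But the notion of irreducibility you state does not support it. Read literally, ``every nonempty $\cG|_{(x,b)}$ keeps $\ldim$'' forces $|\cG|\le 1$: if some $x$ splits $\cG$, two branches of full dimension would give a deeper shattered tree. Your ``equivalent'' reformulation, on the other hand, holds for every class by \eqref{eq:opposite-drops}, so it is vacuous. Even the intended reading, that the SOA branch at each \emph{single} point keeps $\ldim$, is too weak. The lift needs $\ldim(\cG_\sigma|_{(x_1,b_1),\ldots,(x_d,b_d)})=\ldim(\cG_\sigma)\ge 1$ for all $d$ labels on the path at once, and one-point conditions do not compose. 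What you need is $d$-irreducibility (\cref{def:irreducibility}). With it, each leaf is handled independently by attaching a point split by the restricted $\cG_\sigma$, so no comparison of the $\cG_\sigma$ across leaves is required.

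\paragraph{Step 2.} The genuine gap is Step 2, where the whole difficulty lies. A class obtained by greedily restricting along disagreeing \emph{sample} points depends on which points were drawn. So $\cG$, $\SOA_{\cG}$, and the lower-dimensional class you recurse on range over unboundedly many possibilities across runs, and the recursion yields no bounded list. Choosing one of $d+2$ thresholds uniformly bounds the chance of landing near a boundary only by something of order $1/(d+2)$, not $\delta$. Pushing it below $\delta$ would in general require $\Omega(1/\delta)$ levels and hence a $\delta$-dependent list. It also says nothing about which point you restrict along. So neither the factor $d+2$ nor the $\delta^{-(d+1)}$ is actually derived. You do note $\TV(D,D_h)=\loss_D(h)$, but you use it only for accuracy.

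\paragraph{The paper's route.} The paper never makes the selected class canonical.
\begin{itemize}
\item It runs a proper ERM on $k$ batches, forms the empirical distribution $\widehat{\mu}$ of outputs, and uses \cref{lem:heavy-component} to find a $d$-irreducible $\cG\subseteq\supp(\widehat{\mu})$ of mass at least $(d+1)^{-d}$.
\item If $\cG$ is a singleton, it is output. On the good event it lies in the fixed set of accurate ERM outputs with true mass above $1/(N+1)$, which has size at most $N=(d+1)^d$.
\item Otherwise $h_0=\SOA_{\cG}\in\widetilde{\cH}_d$ satisfies $\loss_D(h_0)\le 2^d\alpha$ by \cref{lem:soa-accuracy}. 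Then $m$ fresh examples relabeled by $h_0$ are fed to the list learner for $\widetilde{\cH}_d$.
\item The key point of \cref{lem:transfer}: for any two such $h_0,h_0'$, $\TV(D_{h_0}^m,D_{h_0'}^m)\le 2^{d+1}m\alpha\le\delta/4$ once $\alpha\le\delta/(2^{d+3}m)$. Hence the output falls in a single list $Q_D$ fixed by one accurate $h_0'$, whichever $h_0$ the sample produced.
\end{itemize}
This gives the additive recursion $\LR(\cH)\le(d+1)^d+(d+1)^{d-1}\le(d+2)^d$. The constraint $\alpha\lesssim\delta/m$ at every level is what produces $\delta^{-(d+1)}$.

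\paragraph{Missing ingredients.} Your proposal lacks three things: the heavy-component lemma, the $2^d$ accuracy bound for SOA, and this relabeling/total-variation argument.
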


Note that the dual of the full concept class $\{\pm 1\}^{\cX}$ has list replicability number $\Theta(|\cX|)$ (\cite{chase_local_2024}) and Littlestone dimension $\lfloor\log_2|\cX|\rfloor$. Thus, the bound in \cref{thm:main1} is optimal up to a factor of $\log d$ in the exponent.

We also show that if the VC dimension of our concept class is constant, then that $\log$ factor can be removed.

\begin{theorem}\label{thm:main2}
Let $\cH\subseteq\set{\pm1}^{\cX}$ be a concept class over a countable domain $\cX$ and
let $d=\ldim(\cH)$, $v=\VCdim(\cH)$,  and $v^*=\VCdim^*(\cH)$  be positive integers. Then for any $\epsilon>0$ there is an $(\epsilon,\frac{1}{2^{O(dv^*)}})$-globally stable learner for $\cH$ with sample complexity
\[
O\left(
    \frac{vv^*\log(1/\epsilon)}
         {\epsilon}
\right).\]
In particular
\[
     \rho(\cH) \geq \frac{1}{2^{O(dv^*)}}
     \quad\text{and}\quad
     \LR(\cH) \leq 2^{O(dv^*)}.
\]
\end{theorem}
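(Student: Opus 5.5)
The learner draws $k=O(v^*)$ independent samples $S_1,\dots,S_k$, each of size $m=O(v\log(1/\eps)/\eps)$ (total $O(vv^*\log(1/\eps)/\eps)$). It runs a fixed deterministic proper ERM on each to get $h_1,\dots,h_k\in\cH$, and outputs the pointwise majority $\Maj(h_1,\dots,h_k)$. The argument then has two parts.

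\emph{Accuracy.} By the Fundamental Theorem, with constant confidence and $m$ as above, each $h_i$ has $\loss_D(h_i)\le \eps/c$ for a suitably large constant $c$. If every $h_i$ is accurate, the majority errs at $x$ only when at least $k/2$ of the $h_i$ err there. A union bound or Markov argument then gives $\loss_D(\Maj)\le 2\eps/c\cdot$ (constant), which is at most $\eps$. So any output arising from $k$ accurate ERM outputs is accurate.

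\emph{Stability via stable regularity.} Fix $D$ and let $\cH_\eps=\{h\in\cH:\loss_D(h)\le\eps/c\}$; each $h_i$ lands in this set with high probability. The key structural step is to show the following. Because $\ldim(\cH)=d$, the class is $d$-stable, with no half-graph of length beyond roughly $2^{d}$. Hence the ``type'' of a point $x$ relative to the random ERM output, namely $q(x)=\Pr[h_1(x)=1]$, is nearly $0$ or $1$ for most $x$. Equivalently, for stable classes the measure $\mu$ induced on $\cH$ by ERM has an $\epsilon$-excellent or good-set decomposition: for all but a small-mass set of $x$, $q(x)\in[0,\gamma]\cup[1-\gamma,1]$.

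I would prove this directly by the standard argument. If many $x$ had $q(x)$ bounded away from $0$ and $1$, we could greedily build a mistake tree, or a long half-graph, using the dichotomy, contradicting $\ldim=d$. The quantitative output is a threshold function $h^\ast(x)=\mathbbm{1}[q(x)>1/2]$, which serves as the candidate $h_D$.

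Next, I would bound $\Pr[\Maj(h_1,\dots,h_k)=h^\ast]$ from below. For points $x$ where $q(x)$ is $\gamma$-close to $0$ or $1$, the majority agrees with $h^\ast(x)$ except with probability $e^{-\Omega(k)}$ per point. Uniformity over infinitely many $x$ is where dual VC dimension enters. The functions $x\mapsto(h_1(x),\dots,h_k(x))$ realize only $O(k^{v^*})$-type many patterns in the relevant sense. More precisely, the set system $\{\{h:h(x)=1\}:x\in\cX\}$ has VC dimension $v^*$, so by an $\eps$-approximation with $k=O(v^*)$ samples we can union-bound over the finitely many distinct ``behaviours'' of points.

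The step I expect to be the main obstacle is handling the ambiguous points, those with $q(x)$ not near $0$ or $1$, and getting exactly $h^\ast$ rather than merely an approximation. Here I would use the stability structure more strongly. The set of $x$ with intermediate $q$ is controlled by Littlestone dimension through at most $d$ levels of ambiguity. At each level, conditioning on the event that the ERM samples fall on a prescribed side costs a factor $2^{-O(v^*)}$: there are $k=O(v^*)$ draws, and we ask each to behave in a specified way with constant probability. Iterating over $d$ levels gives probability $2^{-O(dv^*)}$ that the majority equals a single fixed hypothesis $h_D$ determined by $D$, with $h_D$ refined from $h^\ast$ along these levels.

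Combined with accuracy, this yields an $(\eps,2^{-O(dv^*)})$-globally stable learner. \cref{thm:stability-list} then gives $\LR(\cH)\le 2^{O(dv^*)}$.
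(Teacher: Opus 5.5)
Your algorithm (pointwise majority of $O(v^*)$ deterministic proper ERM outputs), your accuracy argument, and your use of uniform convergence for the dual class all match the paper. The structural step, however, has a genuine gap.

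The claim fails as stated. You assert that, because $\ldim(\cH)=d$, the \emph{unconditioned} ERM output measure $\mu$ has $q(x)\in[0,\gamma]\cup[1-\gamma,1]$ for all but a small set of $x$. This is false, and no mistake-tree or half-graph argument can prove it. Measuring ``small'' by $D$ does not help, since global stability requires the output to equal $h_D$ at \emph{every} $x\in\cX$, including points off $\supp(D)$.

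Here is a counterexample:
\begin{itemize}
\item Let $D$ be uniform on two points labelled $+1$.
\item Let $\cH=\{h_1,h_2\}$, where both hypotheses are $+1$ on these points, complementary elsewhere, and nonconstant off $\supp(D)$.
\item Let the ERM output $h_1$ exactly when the first point occurs an even number of times.
\end{itemize}
Then $\ldim(\cH)=1$, $\mu(h_1)=\mu(h_2)=1/2$, and $q\equiv1/2$ off $\supp(D)$. So every such point is ambiguous, and no per-point concentration is available. Moreover, $h^\ast$ (indeed any hypothesis computed pointwise from $q$) is constant off $\supp(D)$, so it equals neither $h_1$ nor $h_2$. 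Yet for odd $k$ the majority always equals one of them. Stability does not make $\mu$ good; it only guarantees a heavy \emph{piece} of $\mu$ that is good.

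Your final paragraph points toward the right repair, and your accounting of $2^{-O(v^*)}$ per level over at most $d$ levels is correct once the levels are defined properly. But the mechanism is missing, and you phrase it in terms of sets of points rather than sets of hypotheses. The paper's key lemma (\cref{lem:weighted-good}, due to Malliaris--Moran) works on the hypothesis side:
\begin{itemize}
\item First restrict $\mu$ to the accurate hypotheses $G$, which have mass at least $3/4$.
\item While some $x$ splits the current set into two parts, each of \emph{current conditional} mass greater than $1/4$, pass to a part of strictly smaller Littlestone dimension. Such a part exists by \eqref{eq:opposite-drops}.
\end{itemize}
Choosing the lower-dimensional side is what bounds the number of levels by $d$. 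Re-testing under the current conditional measure is what makes the final measure good at every $x$.

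This yields $A\subseteq G$ with $\mu(A)\ge\tfrac34\cdot4^{-d}$, such that $\nu=\mu(\cdot\mid A)$ is $1/4$-good at every $x$. All $k$ ERM outputs land in $A$ with probability $\mu(A)^k\ge2^{-O(dv^*)}$, and conditionally they are i.i.d.\ from $\nu$. Then \cref{lem:exact-majority} gives $\Maj=t_\nu$ with probability at least $1/2$.

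The target is therefore $h_D=t_\nu$, which is determined by $\nu$ rather than by $q$; in the example above it is $h_1$ or $h_2$. Its accuracy follows from $\tfrac34\loss_D(t_\nu)\le\mathbb{E}_{h\sim\nu}\loss_D(h)$, or from your own majority argument since $A\subseteq G$.
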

\begin{remark}
  Examining more closely the proof of \cref{thm:stability-list} that can be found in \cite{CMY23}, we can also compute the sample complexity of an $(\epsilon,2^{O(dv^*)})$-list replicable learner. In particular, look at the definitions of $n_0,n_1,n_2$, and $T$ in the proof of \cite[Proposition 2]{CMY23}. We conclude that
$2^{O(dv^*)}\epsilon^{-2}
    \log\frac{1}{\delta}$
samples suffice.
\end{remark}

\section{Technical overview}\label{sec:overview}

\paragraph{Proof of \cref{thm:main1}.} We introduce a new inductive method for bounding the list replicability number. A transfer lemma reduces list-replicably learning $\cH$ to learning an auxiliary class, and since SOA hypotheses of irreducible subclasses of $\cH$ form a class of strictly smaller Littlestone dimension, we can induct on the Littlestone dimension. 

The argument goes as follows. We run ERM on several independent samples to obtain an empirical distribution $\mu$
over accurate hypotheses in $\cH$. If $\mu$ has a heavy hypothesis,
of mass at least $1/(d+1)^d$, we output it.
Otherwise, we use the absence of heavy hypotheses to find a subclass
$\cG_{\mu}\subseteq\supp(\mu)$ of size $|\cG_{\mu}|\geq 2$ satisfying a combinatorial property called
$d$-irreducibility (see \cref{lem:heavy-component}).

Irreducibility was introduced by Ghazi et al.~\cite{GGKM21} and
further developed by Xin Lyu~\cite{Lyu25} to study private learning
of Littlestone classes. We formally define irreducibility in \cref{sec:SOA_irreduciblity}. For now one can think of it as a notion of stability.  We show in \cref{lem:positive-closure} that the class
$$\widetilde{\cH}\coloneqq \set{\SOA_{\cG}\;|\;\cG \subseteq \cH, \;|\cG|\geq 2,\;\cG\;\text{is}\;\text{$d-$irreducible}}$$
satisfies
\begin{equation}\label{eq:irreduciblity_littlestone}
    \ldim(\widetilde{\cH})\le d-1.
\end{equation}
We interpret \eqref{eq:irreduciblity_littlestone} as combinatorial
stability: SOA aggregation produces a class of smaller Littlestone
dimension.

By \cref{lem:positive-closure}, we can thus aggregate hypotheses from $\mu$ into a hypothesis $\SOA_{\cG_{\mu}}$ belonging to a class of smaller Littlestone dimension; moreover, $\SOA_{\cG_{\mu}}$ inherits accuracy from the hypotheses in $\cG_{\mu}$ (\cref{lem:soa-accuracy}). This allows us to study list replicability inductively through the Littlestone dimension. After
obtaining $\SOA_{\cG_{\mu}}$, we draw a fresh sample $S\coloneqq((x_1,y_1),\dots,(x_m,y_m))$ from $D$ and
relabel with $\SOA_{\cG_{\mu}}$ to get \[\widetilde{S}\coloneqq((x_1,\SOA_{\cG_{\mu}}(x_1)),\dots,(x_m,\SOA_{\cG_{\mu}}(x_m))).\]
The resulting sample can be thought of as a sample from the distribution $D_{\SOA_{\cG_{\mu}}}$, which keeps the same $\cX$-marginal of $D$ but uses labels given by $\SOA_{\cG_{\mu}}$. Note that now $D_{\SOA_{\cG_{\mu}}}$ is realizable by $\widetilde{\cH}$. Hence, we can use the list replicable learner for $\widetilde{\cH}$ on the sample $\widetilde{S}$ to output our final hypothesis. 

In a nutshell, we either output a heavy proper hypothesis or instead use the list replicable learner of a lower-dimensional Littlestone class which serves as the source of improper hypotheses for our learner. We prove that this procedure leads to a list replicable algorithm for $\cH$ with list size bounded by 
\[
(d+1)^d+(d+1)^{d-1}\le(d+2)^d,
\]
where the first term bounds the number of possible heavy proper
outputs, and the second comes from induction. 

The complete proof is presented in \cref{sec:thm_main1}.
\cref{lem:transfer} states the general framework for studying list replicability inductively; in \cref{sec:SOA_irreduciblity} we develop the combinatorics of irreducibility needed to apply it.

\paragraph{Proof of \cref{thm:main2}.}
Our second proof constructs a globally stable learner. The list
replicability bound then follows from \cref{thm:stability-list}.

The algorithm is surprisingly simple: run a deterministic proper ERM on $m=O(v^*)$ independent samples $S_1,\ldots,S_m$ and output
their pointwise majority,
\[
    \operatorname{Maj}\bigl(
        \operatorname{ERM}(S_1),\ldots,
        \operatorname{ERM}(S_m)
    \bigr).
\]

To analyze this learner, we use simple tools from stable regularity, a field which has recently found new connections with learning theory~\cite{MM24,malliaris2025epsilon,MALLIARIS2025103633}. Fix a realizable distribution $D$, and let $\mu$ be the output
distribution of one ERM run. We show, using a result of Malliaris and Moran \cite{MM24}, that there exists a subset of accurate hypotheses
$A_\mu\subseteq\supp(\mu)$ such that
\[
    \mu(A_\mu)\ge 2^{-O(d)}
\]
and the conditional distribution
$\nu=\mu(\,\cdot\mid A_\mu)$ satisfies
\[
    \Pr_{h\sim\nu}[h(x)=1]
    \in [0,1/4]\cup[3/4,1]
    \qquad\text{for every }x\in\cX.
\]
In other words, at every point, at least three quarters of the
mass under $\nu$ agrees on a label.

Uniform convergence for the dual
class shows that sampling $m=O(v^*)$ hypotheses independently
from $\nu$ and taking their majority gives with constant probability the same hypothesis.

Finally, the probability that we sample from $A_\mu$ all $m$ times is
\[
  \mu(A_\mu)^m
    \ge \bigl(2^{-O(d)}\bigr)^{O(v^*)}
    =2^{-O(dv^*)}.
\]
Thus, our learner outputs a fixed accurate hypothesis with probability \(2^{-O(dv^*)}\), establishing global stability and the desired list replicability bound.

\section{Proof of \cref{thm:main1}}\label{sec:thm_main1}

\subsection{Transfer lemma}
The following lemma describes a framework for bounding the list replicability number of a concept class $\cH$ via the list replicability number of an auxiliary concept class $\widetilde{\cH}$. The class $\widetilde{\cH}$ will be the source of the improper hypotheses used by our list replicable learner for $\cH$. When $\widetilde{\cH}$ is less complex than $\cH$ (e.g. has smaller Littlestone dimension), this lemma leads to strong bounds on the list replicability number.

\begin{lemma}[Transfer lemma]\label{lem:transfer}
Let $\cH,\widetilde{\cH} \subseteq \set{\pm 1}^\cX$ be concept classes over a countable domain $\cX$, where $\cH$ has finite VC dimension, and let $N\ge1$ be an integer and $c\geq 1$. Assume there is a fixed selection rule that assigns to every finitely supported probability distribution $\mu$ on $\cH$ a hypothesis $h_0 = h_0(\mu) \in \set{\pm1}^{\cX}$ satisfying one of the following:
\begin{enumerate}[label=(\roman*)]
  \item \label{item:heavy_proper}$h_0\in\supp(\mu)$ and $\mu(h_0)\ge 1/N$;
  \item \label{item:transfer_accuracy} $h_0\in \widetilde{\cH}$ and for every distribution $D$ on $\cX \times \set{\pm 1}$, we have 
  $$  \loss_D(h_0)\le c\max_{h\in\supp(\mu)}\loss_D(h).$$
\end{enumerate}
Then $\LR(\cH)\le N+\LR(\widetilde{\cH})$.
\end{lemma}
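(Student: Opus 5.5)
We construct an explicit list replicable learner for $\cH$ from a list replicable learner for $\widetilde{\cH}$. Fix $\eps,\delta$. We may assume $L\coloneqq\LR(\widetilde{\cH})<\infty$. Let $\widetilde{\cA}$ be an $(\eps/(2c),L)$-list replicable learner for $\widetilde{\cH}$ with confidence $\delta/3$ and sample size $\widetilde n$.

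\textbf{The learner.} On input a sample of size $kn_0+\widetilde n$:
\begin{enumerate}[label=(\arabic*)]
  \item Split off $k$ independent blocks of size $n_0$ and run a fixed deterministic proper ERM for $\cH$ on each. Let $\hat\mu$ be the empirical distribution of the $k$ outputs; it is finitely supported on $\cH$.
  \item Compute $h_0=h_0(\hat\mu)$ by the selection rule.
  \item If case \ref{item:heavy_proper} holds, output $h_0$.
  \item Otherwise, relabel the remaining $\widetilde n$ points by $h_0$ and output $\widetilde{\cA}$ run on the relabeled sample.
\end{enumerate}
The relabeled sample is i.i.d.\ from $D_{h_0}$. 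This distribution is realizable by $\widetilde{\cH}$, since $h_0\in\widetilde{\cH}$. By \eqref{eq:correction}, any $g$ satisfies $\loss_D(g)\le\loss_{D_{h_0}}(g)+\loss_D(h_0)$.

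\textbf{Accuracy.} By the fundamental theorem (finite VC dimension), choose $n_0$ so that each ERM output has loss at most $\eps/(2c)$ with probability $1-\delta/(3k)$. A union bound then gives $\max_{h\in\supp(\hat\mu)}\loss_D(h)\le\eps/(2c)$ with probability $1-\delta/3$. On that event, in case (i) the output has loss at most $\eps/(2c)\le\eps$. In case (ii), $\loss_D(h_0)\le\eps/2$ and $\widetilde{\cA}$'s output has $\loss_{D_{h_0}}\le\eps/(2c)\le \eps/2$, so the total loss is at most $\eps$.

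\textbf{Main obstacle: the list.} The list must not depend on the random $\hat\mu$. Let $\mu$ be the true output distribution of a single ERM run; it is fixed by $D$. For case (i) outputs, take $\Lambda_1=\set{h:\mu(h)\ge 1/(2N)}$ together with the accuracy condition. This set has size at most $2N$, which is too many. To obtain exactly $N$, the plan is to exploit finite support more carefully: choose $k$ large so that, with probability $1-\delta/3$, every $h$ has $|\hat\mu(h)-\mu(h)|<\tau$ for a tiny $\tau$. This requires only uniform convergence on the countably many atoms of $\mu$, plus a bound on the total mass of the tail. Then any $h$ with $\hat\mu(h)\ge 1/N$ satisfies $\mu(h)>1/N-\tau$. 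The number of atoms with $\mu$-mass above $1/N-\tau$ is at most $N$ once $\tau$ is small relative to the gap between $1/N$ and the $(N+1)$-st largest atom's mass. Since that gap is distribution dependent, I would handle it by randomizing the threshold: replace $1/N$ by a random threshold in $[1/N,2/N]$ (adjusting the selection-rule interface, or shrinking $N$ to a constant multiple if this fails), in the style of the CMY23 proof of \cref{thm:stability-list}.

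For case (ii) outputs, the difficulty is that $h_0$ varies with $\hat\mu$, so the list of $\widetilde{\cA}$ on $D_{h_0}$ changes. The remedy is to make $h_0$ essentially deterministic. With the same concentration, $\hat\mu$ is close to $\mu$, and in case (ii) we would instead apply the selection rule to a rounded version of $\hat\mu$ that equals a fixed distribution with high probability. Alternatively, use global stability (\cref{thm:stability-list}): conditioning on the most likely value of $h_0$ yields a single fixed hypothesis output with probability roughly $1/(N+L)$. I expect this stability step, proving $\rho(\cH)\ge 1/(N+L)$ rather than building the list directly, to be the cleanest route and the hardest step. Finally, \cref{thm:stability-list} converts it to $\LR(\cH)\le N+L$.
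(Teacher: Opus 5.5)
There is a genuine gap, in exactly the step you flag as the main obstacle: the output in case (ii). Neither of your remedies works.

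The selection rule has no continuity, so $\hat\mu$ being close to $\mu$ says nothing about $h_0(\hat\mu)$. Your rounding would therefore have to produce one fixed distribution \emph{exactly} with high probability. That fails whenever atoms of $\mu$ sit near a rounding threshold, and $\mu$ may have infinitely many small atoms whose presence in $\hat\mu$ is random. Hedging over several possible roundings multiplies the auxiliary list size by their number. The stability route fails for the same reason: $h_0$ may be spread over many values, each of small probability, so conditioning on its most likely value gives no bound of order $1/(N+L)$.

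The missing idea is that $h_0$ does not need to be stabilized at all. Let the ERM accuracy depend on the sample size $\widetilde n$ of $\widetilde\cA$, e.g.\ $\alpha=\min(\eps/(4c),\delta/(8c\widetilde n))$; your choice $\eps/(2c)$ is exactly what blocks this. On the good event, every $h_0$ selected in case (ii) has $\loss_D(h_0)\le c\alpha$. Fix one reference $h_0'\in\widetilde\cH$ with $\loss_D(h_0')\le c\alpha$; if none exists, case (ii) cannot occur on the good event. Let $Q_D$ be the list of $\widetilde\cA$ for the realizable distribution $D_{h_0'}$. By \eqref{eq:correction} and the triangle inequality,
\[
\TV\bigl(D_{h_0}^{\widetilde n},D_{h_0'}^{\widetilde n}\bigr)\le\widetilde n\bigl(\loss_D(h_0)+\loss_D(h_0')\bigr)\le2c\alpha\widetilde n\le\delta/4.
\]
So whichever $h_0$ was selected, $\widetilde\cA$ on the relabeled sample lands in $Q_D$ except with probability at most its own failure probability plus $\delta/4$. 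Every $g\in Q_D$ satisfies $\loss_D(g)\le\loss_{D_{h_0'}}(g)+\loss_D(h_0')\le\eps$. This gives a single fixed list of size $L$ directly, with no detour through \cref{thm:stability-list}.

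Your case (i) argument also stalls on a non-issue. The gap you call distribution dependent is uniformly bounded below. The $(N+1)$-st largest atom of any probability distribution has mass at most $1/(N+1)$, so at most $N$ atoms have mass exceeding $1/(N+1)$. Taking $\tau=1/(2N(N+1))$, any $h_0$ with $\hat\mu(h_0)\ge1/N$ has $\mu(h_0)\ge 1/N-\tau>1/(N+1)$. It therefore lies in the fixed set $\set{h:\mu(h)>1/(N+1),\ \loss_D(h)\le\alpha}$, which has size at most $N$.

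Your randomized threshold would in any case be incompatible with the lemma's interface. The selection rule is fixed, and if $h_0$ satisfies only (i) but falls below your random threshold, you cannot hand it to $\widetilde\cA$, since $h_0$ need not belong to $\widetilde\cH$. Replacing $N$ by a constant multiple would not give the stated bound $N+\LR(\widetilde\cH)$.
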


\begin{proof}
   Fix $\epsilon,\delta\in(0,1)$. Choose a list learner $\cA$ for
$\widetilde{\cH}$ with list size $L=\LR(\widetilde{\cH})$, accuracy $\epsilon/2$, and failure probability $\delta/4$. Let $m\ge1$ be its sample size.  

Now choose $k$ sufficiently large such that for any countably supported probability distribution $\lambda$, the empirical distribution $\widehat{\lambda}$ of $k$ independent draws from $\lambda$ satisfies
\[
  \sup_{h \in \supp(\lambda)} \bigl|\widehat{\lambda}(h)-\lambda(h)\bigr| \le \frac{1}{2N(N+1)}
\]
with probability at least $1-\delta/4$.\footnote{Such a $k$ exists, independently of the distribution, by the VC uniform convergence theorem applied to the class of singletons, which has VC dimension one.}

Set $\alpha=\min(\frac{\epsilon}{4c},\frac{\delta}{8mc})$. Finally, fix a deterministic proper ERM for $\cH$ and choose a sample size $n$ guaranteeing it error at most $\alpha$ and failure probability at most $\delta/(4k)$.

 Having chosen the necessary parameters, we present our list-replicable algorithm.

 \begin{center}
\fbox{%
\begin{minipage}{0.93\linewidth}
\textbf{List-replicable learner for $\cH$.}

\begin{enumerate}
    \item Draw $kn+m$ independent labeled examples.
    Partition the first $kn$ examples into batches
    $S_1,\dots,S_k$, each of size $n$, and reserve the remaining $m$ samples in $T=((x_1,y_1),\dots,(x_m,y_m))$.

    \item Run the ERM for $\cH$ on each batch,
    obtaining $h_1,\ldots,h_k$, and form the empirical
    distribution
    \[
        \widehat{\mu}(h)
        =\frac{1}{k}\sum_{i=1}^{k}\mathbbm{1}_{h=h_i}.
    \]

    \item Apply the fixed selection rule to
    $\widehat{\mu}$ to obtain a hypothesis $h_0$.

    \item\label{item:output_heavy} If
    $\widehat{\mu}(h_0)\ge 1/N$, output $h_0$.

    \item \label{item:auxiliary_output} Otherwise, $h_0\in\widetilde{\cH}$ and satisfies \ref{item:transfer_accuracy}. Relabel the reserved
    examples to obtain
    \[
        T_{h_0}
        =((x_1,h_0(x_1)),\ldots,(x_m,h_0(x_m))),
    \]
    and output $\cA(T_{h_0})$.
\end{enumerate}
\end{minipage}%
}
\end{center}

We now show accuracy and replicability for our learner. Fix an arbitrary realizable distribution $D$. Let $\lambda_D$ denote the probability distribution of
one ERM output from an $n$-sample from $D$. This distribution is countably supported
because its input space is countable.

With probability at least
$1-\delta/2$, the following event, which we denote by $E$, holds:
\begin{equation}\label{eq:good-event}
  \max_{i\le k}\loss_D(h_i)\le\alpha,
  \qquad
  \sup_{h\in\supp(\lambda_D)}|\widehat\mu(h)-\lambda_D(h)|\le \frac{1}{2N(N+1)}.
\end{equation}
Indeed, the first inequality holds with probability at least $1-\delta/4$
by a union bound over the $k$ ERM runs. The second holds with
probability at least $1-\delta/4$ by the choice of $k$.

We now show that with high probability the output of our algorithm belongs to one of two small lists of accurate hypotheses. 

\paragraph{Heavy proper hypotheses.}
On $E$, every output from step \ref{item:output_heavy} belongs to the fixed set
\[
  P_D=\left\{h:\lambda_D(h)>\frac1{N+1},\ \loss_D(h)\le\alpha\right\},
\]
since \eqref{eq:good-event} gives $\loss_D(h_0)\le\alpha$ and $\lambda_D(h_0)\ge\frac{1}{N}-\frac{1}{2N(N+1)}>\frac1{N+1}$. Note that $|P_D|\le N$.

\paragraph{The auxiliary list.}On $E$, whenever the algorithm reaches step
\ref{item:auxiliary_output}, the selected hypothesis
$h_0\in\widetilde{\cH}$ satisfies $\loss_D(h_0)\le c\alpha$.
If $\widetilde{\cH}$ contains no hypothesis satisfying this
bound, then this step cannot occur on $E$ (and so our algorithm is already list-replicable with list size at most $|P_D|\le N$).

Otherwise, fix any $h_0'\in\widetilde{\cH}$ with
$\loss_D(h_0')\le c\alpha$. We will use $h_0'$ to construct
a fixed list $Q_D$, and show that this same list works for
every auxiliary hypothesis $h_0$ selected when $E$ holds.

Let the distributions $D_{h_0}$ and $D_{h_0'}$ have the same $\cX$-marginal as $D$ but relabeled with $h_0$ and $h_0'$ respectively. Note $D_{h_0'}$ is realizable by $\widetilde{\cH}$, so there is a fixed list $Q_D$ of size at most $L$ such that 
\begin{equation}
    \Pr_{S\sim D^m_{h_0'}}(\cA(S)\in Q_D)\ge 1-\delta/4,\quad\loss_{D_{h_0'}}(g)\leq\epsilon/2~~(g\in Q_D).
\end{equation}

We show that running $\cA$ on samples from $D_{h_0}$ instead of $D_{h_0'}$ still outputs from $Q_D$ with high probability. Indeed, the total variation distance
\begin{align*}
\TV(D^m_{h_0'},D^m_{h_0}) & \leq m \TV(D_{h_0'},D_{h_0}) \\
&\le m(\TV(D_{h_0'},D)+\TV(D_{h_0},D)) \\ 
&= m(\loss_{D}(h_{0}')+\loss_{D}(h_{0})) \\
& \leq 2 m c\alpha \leq \delta/4.
\end{align*}
Hence, 
\[ \Pr_{S\sim D^m_{h_0}}(\cA(S)\in Q_D)\ge 1-\delta/2.\]

We now show accuracy for any hypothesis $g\in Q_D$:
\begin{align*}
    \loss_D(g)  & \le \loss_{D_{h_0'}}(g)+\TV(D,D_{h_0'}) \\
                & = \loss_{D_{h_0'}}(g)+\loss_{D}(h_0') \qquad \text{(by \eqref{eq:correction})} \\
                & \le \epsilon/2+c\alpha  \le \epsilon.
\end{align*}
 
Putting everything together: on $E$, if the algorithm halts at step \ref{item:output_heavy}, its output lies in $P_D$. Otherwise, in step \ref{item:auxiliary_output} it outputs an element of $Q_D$ except with probability $\delta/2$. Since $\Pr[E^c]\le\delta/2$, the output lies in $P_D\cup Q_D$ with probability at least $1-\delta$. We have shown that $|P_D\cup Q_D|\le N+L$ and that every hypothesis in this list has loss at most $\epsilon$. Thus the algorithm is an $(\epsilon, N+L)$-list replicable learner for $\cH$.
\end{proof}

\subsection{SOA and irreducibility}\label{sec:SOA_irreduciblity}

Here we collect key lemmas about the combinatorics of Littlestone classes that will be used in the proof of \cref{thm:main1}.

Recall the definition of the SOA rule, where $\SOA_{\cH}(x)=1$ if and only if 
$\ldim(\cH|_{(x,1)})\geq\ldim(\cH|_{(x,-1)})$, and recall that the opposite branch always drops dimension \eqref{eq:opposite-drops}.
Our learner will output SOA hypotheses of subclasses of accurate hypotheses, so we first record a bound on the loss of the SOA rule.
\begin{lemma}[Accuracy of SOA]\label{lem:soa-accuracy}
Let $\cH$ be a concept class, let $d=\ldim(\cH)$, and let $D$ be any
distribution on $\cX \times \set{\pm1}$. Then
\[
        \loss_D(\SOA_{\cH})
        \leq 2^d\sup_{h\in\cH}\loss_D(h).
\]
\end{lemma}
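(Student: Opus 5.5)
We argue by induction on $d=\ldim(\cH)$, proving the statement simultaneously for all distributions $D$. Write $\eps=\sup_{h\in\cH}\loss_D(h)$ and $E=\set{(x,y):\SOA_\cH(x)\ne y}$, so that $\loss_D(\SOA_\cH)=D(E)$; we assume $\cH\neq\emptyset$.

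\emph{Base case $d=0$.} Here $\cH=\set{h}$ is a single concept. With the convention $\ldim(\emptyset)=-1$, the SOA labels each $x$ by the only nonempty branch, so $\SOA_\cH=h$ and the bound holds with factor $1=2^0$.

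\emph{Inductive step.} The key pointwise fact is \eqref{eq:opposite-drops}. If $(x,y)\in E$, then $y=-\SOA_\cH(x)$, hence $\ldim(\cH|_{(x,y)})\le d-1$. So every concept that is correct at an SOA-error point lies in a restriction of strictly smaller dimension. Fix any $h^\ast\in\cH$ and split $E=E_1\cup E_2$:
\begin{itemize}
  \item $E_1=E\cap\set{h^\ast(x)\ne y}$, which has mass at most $\eps$;
  \item $E_2=E\cap\set{h^\ast(x)=y}=\set{h^\ast(x)=y\ne\SOA_\cH(x)}$, which consists of points where $h^\ast$ disagrees with $\SOA_\cH$.
\end{itemize}
It therefore suffices to show
\[
\Pr_{x}\bigl[h^\ast(x)\ne\SOA_\cH(x)\bigr]\le(2^d-1)\eps .
\]
To obtain this I would use a \emph{relabeled} distribution rather than a single restriction. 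Let $D'=D_{h^\ast}$ be $D$ with labels replaced by $h^\ast$. By \eqref{eq:correction} and the triangle inequality, every $h\in\cH$ satisfies $\loss_{D'}(h)\le 2\eps$. Relative to $D'$, the quantity to bound is just $\loss_{D'}(\SOA_\cH)$. The aim is then to show that this loss is controlled by the SOA of a class of dimension $d-1$, so that induction gives $2\cdot 2^{d-1}\eps$.

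\emph{Main obstacle.} The difficulty is that the SOA of $\cH$ is not the SOA of any single lower-dimensional restriction, since different error points give different restrictions $\cH|_{(x,y)}$. I would first try the following argument. Consider the subclass $\cH'$ of concepts that agree with $h^\ast$ at every point where $\SOA_\cH$ disagrees with $h^\ast$. Show that $\cH'$ is either a restriction-like class of dimension at most $d-1$, or else controlled by a shattering argument that builds a depth-$(d+1)$ mistake tree, which is a contradiction. Then apply the induction hypothesis to $\cH'$ under $D'$.

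If this fails, the fallback is to go point by point. Partition the disagreement region according to the value of $\SOA_\cH$. On each part, compare with the SOA of the dimension-$(d-1)$ branch $\cH|_{(x,h^\ast(x))}$, noting that this branch always contains $h^\ast$ and has all losses at most $\eps$. The doubling per level of the recursion then yields the factor $2^d$. Making this comparison uniform over $x$ is the step I expect to require the most care.
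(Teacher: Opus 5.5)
There is a genuine gap: the step that carries all the content is never carried out. Both of your candidate arguments stop exactly at the obstacle you flag, and neither can be completed as described.

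In the first, let $\Delta=\set{x:h^\ast(x)\ne\SOA_\cH(x)}$. If $\Delta\neq\emptyset$ then indeed $\ldim(\cH')\le d-1$. But every member of $\cH'$ agrees with $h^\ast$ on $\Delta$, so (with your convention $\ldim(\emptyset)=-1$) $\SOA_{\cH'}=h^\ast$ on $\Delta$. Hence $\loss_{D'}(\SOA_{\cH'})$ gets no contribution from $\Delta$, and the induction hypothesis for $\cH'$ says nothing about $\Pr_x[\Delta]$, which is the quantity you need.

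In the fallback, the class $\cH|_{(x,h^\ast(x))}$ changes with $x$, and its SOA trivially outputs $h^\ast(x)$ at $x$. The induction hypothesis only gives an averaged bound for one fixed class under one fixed distribution, so these pointwise comparisons cannot be assembled.

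The reduction itself is also problematic. For an arbitrary $h^\ast$, the target $\Pr_x[h^\ast(x)\ne\SOA_\cH(x)]\le(2^d-1)\eps$ is false. Take $\cX=\set{u,v}$, $h_1\equiv-1$, $h_2\equiv+1$, and $\cH=\set{h_1,h_2}$, so $d=1$ and, by ties, $\SOA_\cH=h_2$. Let $D$ put mass $1/2$ on $(u,-1)$ and mass $1/2$ on $(v,+1)$. Then $\eps=1/2$, yet $h^\ast=h_1$ disagrees with $\SOA_\cH$ everywhere. You give no rule for choosing a good $h^\ast$.

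The constants also do not close. Even granting the plan, an inductive bound of $2\cdot 2^{d-1}\eps$ on $\loss_{D'}(\SOA_\cH)$, plus $D(E_1)\le\eps$, yields $(2^d+1)\eps$ rather than $2^d\eps$.

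The missing idea is to avoid both the induction and the choice of a single comparison concept. Instead, fix once and for all a finite family that cannot fit inside any dimension-dropping branch:

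\begin{enumerate}
\item Take a depth-$d$ tree shattered by $\cH$, and one concept $h_\sigma\in\cH$ for each leaf $\sigma\in\set{\pm1}^d$. The family $\cG=\set{h_\sigma}$ still shatters the tree, so $\ldim(\cG)=d$.
\item Your own pointwise fact \eqref{eq:opposite-drops} gives $\ldim(\cH|_{(x,-\SOA_\cH(x))})<d$. Hence $\cG\not\subseteq\cH|_{(x,-\SOA_\cH(x))}$, so at every $x$ some $h_\sigma$ agrees with $\SOA_\cH(x)$.
\item Therefore, whenever $\SOA_\cH(x)\ne y$, some $h_\sigma(x)\ne y$. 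That is, $\mathbbm{1}[\SOA_\cH(x)\ne y]\le\sum_\sigma\mathbbm{1}[h_\sigma(x)\ne y]$ for every labeled point.
\item Taking expectations gives $\loss_D(\SOA_\cH)\le\sum_\sigma\loss_D(h_\sigma)\le 2^d\sup_{h\in\cH}\loss_D(h)$.
\end{enumerate}

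This is exactly the uniform-in-$x$ comparison you were looking for. The $2^d$ witnesses are chosen independently of $x$, so no per-point class and no recursion are needed. This is the paper's proof.
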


\begin{proof}
Choose a depth-$d$ tree shattered by $\cH$, and choose one
$h_\sigma\in\cH$ for each leaf $\sigma\in\set{\pm1}^d$. The class
\[
                    \mathcal{G}=\set{h_\sigma:\sigma\in\set{\pm1}^d}
\]
still shatters this tree, and hence $\ldim(\mathcal{G})=d$.

For every $x\in\cX$, at least one member of $\mathcal{G}$
agrees with $\SOA_{\cH}(x)$. Otherwise
$\mathcal{G}\subseteq\cH|_{x=-\SOA_{\cH}(x)}$, contradicting
\eqref{eq:opposite-drops}. Consequently, for every labeled point $(x,y)$,
\[
        \mathbbm{1}[\SOA_{\cH}(x)\neq y]
        \leq
        \sum_{\sigma\in\set{\pm1}^d}
        \mathbbm{1}[h_\sigma(x)\neq y].
\]
Taking expectations proves the result.
\end{proof}

Another ingredient is \emph{irreducibility}, a notion  introduced by Ghazi, Golowich, Kumar, and
Manurangsi~\cite[Definition~4.1]{GGKM21}. We use the formulation of Lyu~\cite[Definition~4.1]{Lyu25}.

\begin{definition}[Irreducibility]\label{def:irreducibility}
Let $k\in\N$. A concept class $\cG$ is $k$-\emph{irreducible} if, for
every $x_1,\ldots,x_k\in\cX$,
\[
 \ldim\bigl(
   \cG|_{(x_1,\SOA_{\cG}(x_1)),\ldots,
           (x_k,\SOA_{\cG}(x_k))}
 \bigr)
 =\ldim(\cG).
\]
Otherwise $\cG$ is $k$-\emph{reducible}.
\end{definition}

For $k\in\N$, define the \emph{$k$-SOA closure} of $\cH$ by
\[
 \widehat{\cH}_k
 \coloneqq
 \Set{
   \SOA_{\cG}:
   \emptyset\neq\cG\subseteq\cH,
   \ \cG\text{ is $k$-irreducible}
 }.
\]

\begin{lemma}[{\cite[Lemma 4.3]{GGKM21}}]
\label{lem:classical-closure}
If $\ldim(\cH)\leq d$, then
\[
                         \ldim(\widehat{\cH}_{d+1}) \leq d.
\]
If $\ldim(\cH)=d$, then equality holds because
$\cH\subseteq\widehat{\cH}_{d+1}$.
\end{lemma}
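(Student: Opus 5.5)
We prove the two assertions of \cref{lem:classical-closure} separately. The equality part is the easy one. Assume $\ldim(\cH)=d$. For each $h\in\cH$, the singleton $\cG=\set{h}$ has Littlestone dimension $0$. Moreover $\SOA_{\set{h}}=h$, because $\set{h}|_{(x,h(x))}=\set{h}$ has dimension $0$ while the opposite restriction is empty (dimension $-1$ by convention). Restricting to the points $(x_i,h(x_i))$ leaves $\set{h}$ unchanged, so $\set{h}$ is $k$-irreducible for every $k$, and hence $h\in\widehat{\cH}_{d+1}$. Thus $\cH\subseteq\widehat{\cH}_{d+1}$, and combined with the upper bound this gives $\ldim(\widehat{\cH}_{d+1})=d$.

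For the upper bound, we argue by contradiction. Suppose $\widehat{\cH}_{d+1}$ shatters a mistake tree of depth $d+1$. Take the root-to-leaf path obtained by adversarially following, at each node $x_i$, a label $b_i$ defined in terms of $\cH$. Following the online mistake-bound intuition, we choose $b_i=-\SOA_{\cH_{i-1}}(x_i)$, where $\cH_0=\cH$ and $\cH_i=\cH_{i-1}|_{(x_i,b_i)}$. By \eqref{eq:opposite-drops}, each step strictly drops the Littlestone dimension of the current version space, so $\ldim(\cH_{d+1})\le d-(d+1)<0$, that is, $\cH_{d+1}=\emptyset$. By shattering, some $\SOA_{\cG}$ with $\cG\subseteq\cH$ nonempty and $(d+1)$-irreducible satisfies $\SOA_{\cG}(x_i)=b_i$ for all $i\le d+1$. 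Irreducibility with the points $x_1,\dots,x_{d+1}$ gives
\[
\ldim\bigl(\cG|_{(x_1,b_1),\ldots,(x_{d+1},b_{d+1})}\bigr)=\ldim(\cG)\ge 0,
\]
so this restriction is nonempty. But $\cG|_{(x_1,b_1),\ldots,(x_{d+1},b_{d+1})}\subseteq\cH_{d+1}=\emptyset$, a contradiction.

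The main obstacle is that the choice of path must be made so that the tree is followed node by node. The label $b_i$ is determined by $\cH_{i-1}$, which depends only on the earlier path, so the path is well defined in the tree. Irreducibility then applies directly, because it quantifies over all tuples $x_1,\dots,x_k$ and we use it with $k=d+1$. We also need the convention $\ldim(\emptyset)=-1$, so that a nonempty class has dimension at least $0$; this convention is already implicit in \eqref{eq:opposite-drops}.
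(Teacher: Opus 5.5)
Your proof is correct. Its key step, that $(d+1)$-irreducibility forces $\cG|_{(x_1,b_1),\ldots,(x_{d+1},b_{d+1})}$ to be nonempty so that some $h\in\cH$ follows the path, is the mechanism the paper uses in its proof of \cref{lem:positive-closure}; the paper itself only cites this lemma from \cite{GGKM21}, and your singleton argument for $\cH\subseteq\widehat{\cH}_{d+1}$ is the intended one. Your SOA-adversary construction of the path is a harmless detour: applying the key step at every leaf shows directly that $\cH$ shatters the depth-$(d+1)$ tree, contradicting $\ldim(\cH)\le d$, so \eqref{eq:opposite-drops} is not needed (and it gives a strict drop only while the version space is nonempty, though $\cH_{d+1}=\emptyset$ still follows because restrictions of the empty class stay empty).
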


We need a small modification that isolates only the classes with positive Littlestone dimension. Set
\begin{equation}\label{eq:double-hat}
\widetilde{\cH}_k
 \coloneqq
 \Set{
   \SOA_{\cG}:
   \cG\subseteq\cH,\ 
   \ldim(\cG)\geq1,\ 
   \cG\text{ is $k$-irreducible}
 }.
\end{equation}

\begin{lemma}
\label{lem:positive-closure}
If $\ldim(\cH)\leq d$ and $d\geq1$, then $\ldim(\widetilde{\cH}_d)
                  \leq d-1.
$
\end{lemma}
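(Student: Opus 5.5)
The plan is to argue by contradiction, in the spirit of the classical closure bound \cref{lem:classical-closure}. Suppose $\widetilde{\cH}_d$ shatters a mistake tree of depth $d$. I will walk down this tree along a single root-to-leaf path, chosen adaptively using $\cH$ itself, so that the restriction of $\cH$ to the path loses one unit of Littlestone dimension at every step. At the leaf, the $d$-irreducible witness class $\cG$ will be forced to keep its positive dimension on a subclass of a class of dimension at most $0$, which is impossible.

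\emph{Constructing the path.} Set $\mathcal{C}_0=\cH$, so $\ldim(\mathcal{C}_0)\le d$. Suppose we are at level $i$ of the tree, at a node labeled $x_i$, with current class $\mathcal{C}_{i-1}=\cH|_{(x_1,b_1),\ldots,(x_{i-1},b_{i-1})}$. Choose the edge label
\[
b_i \coloneqq -\SOA_{\mathcal{C}_{i-1}}(x_i),
\]
and set $\mathcal{C}_i=\mathcal{C}_{i-1}|_{(x_i,b_i)}$. By \eqref{eq:opposite-drops}, $\ldim(\mathcal{C}_i)<\ldim(\mathcal{C}_{i-1})$. This uses the convention that the empty class has dimension $-1$. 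If some $\mathcal{C}_{i-1}$ is already empty, every later class is also empty, and the conclusion below only becomes stronger. After $d$ steps this gives $\ldim(\mathcal{C}_d)\le d-d=0$.

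\emph{The contradiction.} The path $(x_1,b_1),\ldots,(x_d,b_d)$ ends at a leaf. Since $\widetilde{\cH}_d$ shatters the tree, there is a $d$-irreducible $\cG\subseteq\cH$ with $\ldim(\cG)\ge1$ and $\SOA_{\cG}(x_i)=b_i$ for all $i\le d$. Apply \cref{def:irreducibility} with $k=d$ and the points $x_1,\ldots,x_d$; repeated points cause no issue. This gives
\[
\ldim\bigl(\cG|_{(x_1,b_1),\ldots,(x_d,b_d)}\bigr)=\ldim(\cG)\ge1 .
\]
On the other hand, $\cG\subseteq\cH$ gives $\cG|_{(x_1,b_1),\ldots,(x_d,b_d)}\subseteq\mathcal{C}_d$. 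Littlestone dimension is monotone, so the left-hand side is at most $\ldim(\mathcal{C}_d)\le 0$. This contradiction shows that no depth-$d$ tree is shattered, hence $\ldim(\widetilde{\cH}_d)\le d-1$.

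The only delicate point is the choice of $b_i$. It must be made relative to the running restriction $\mathcal{C}_{i-1}$ of the whole class $\cH$, not relative to any particular $\cG$, because the witness $\cG$ is determined only at the leaf. The path choice therefore has to guarantee a dimension drop uniformly for every $\cG$. Making this work needs two facts: restriction commutes with passing to subclasses, and monotonicity of $\ldim$ then transfers the drop in $\mathcal{C}_i$ to every candidate $\cG$. I would also check that the hypothesis $d\ge1$ is used exactly to have $d$ levels available, so that $\mathcal{C}_d$ has dimension at most $0$.
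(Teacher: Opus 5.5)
Your proof is correct. The decisive ingredient is the same as in the paper: $d$-irreducibility of the leaf witness $\cG$ keeps $\ldim\ge 1$ after restricting $\cG$ to its own SOA labels along the branch, and this clashes with $\ldim(\cH)\le d$. Where you differ is in how that clash is produced.

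The paper works with all $2^d$ leaves at once. Under each leaf $\sigma$ it attaches a depth-one split realized inside $\cG_\sigma$ restricted to the branch, which is a subclass of $\cH$. This yields an explicit depth-$(d+1)$ tree shattered by $\cH$, so the contradiction follows directly from the definition of $\ldim$.

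You instead choose a single root-to-leaf path adversarially, taking the anti-SOA label of the running restriction $\mathcal{C}_{i-1}$ of $\cH$ at each step. By \eqref{eq:opposite-drops}, this forces $\ldim(\mathcal{C}_d)\le 0$, so you only need the one witness $\cG$ at that leaf together with monotonicity of $\ldim$. Your route is essentially the mistake-bound (SOA adversary) dual of the paper's tree-gluing argument: the gluing is done one level at a time inside \eqref{eq:opposite-drops} rather than once at the leaves. It costs a small amount of bookkeeping, namely the convention $\ldim(\emptyset)=-1$ and the case where the restriction empties out, and you handled both correctly. In exchange, you invoke irreducibility for a single witness and never assemble a new tree.

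One minor remark: your argument does not actually use $d\ge1$. With $d=0$ the witness $\cG$ would already be a subclass of $\cH$ with $\ldim(\cG)\ge1>\ldim(\cH)$, so that hypothesis is only there to make the statement meaningful.
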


\begin{proof}
Suppose instead that $\widetilde{\cH}_d$ shatters a complete
depth-$d$ tree. For every leaf $\sigma$, choose
\[
                 f_\sigma=\SOA_{\cG_\sigma}
                 \in\widetilde{\cH}_d
\]
which realizes the corresponding branch. The $d$ labels along this branch
are precisely the fixed SOA labels of $\cG_\sigma$. Since
$\cG_\sigma$ is $d$-irreducible, restricting by all these labels leaves its
positive Littlestone dimension unchanged. The surviving restriction
therefore shatters a depth-one tree.

Attach such a final split below each leaf $\sigma$. Every witnessing concept
belongs to $\cG_\sigma\subseteq\cH$, so this produces a complete
depth-$(d+1)$ tree shattered by $\cH$, contradicting
$\ldim(\cH)\leq d$.
\end{proof}

\begin{remark}Note 
$\widehat{\cH}_d =\cH\cup\widetilde{\cH}_d$. Thus $\widetilde{\cH}_d$ is the potentially improper part of the $d$-SOA closure and all genuinely improper hypotheses introduced by the classical closure lie in a class of Littlestone dimension at most $d-1$.
\end{remark}

The following argument is inspired by Fact~2.8 and Lemma~6.5 in~\cite{MM24}. However, we substitute the notion of goodness in~\cite{MM24} with that of irreducibility.

\begin{lemma}[Heavy irreducible component]\label{lem:heavy-component}
Let $d\geq1$, let $\cH$ be a nonempty concept class with $\ldim(\cH)\leq d$, and let $\mu$
be a finitely supported probability distribution on $\cH$. There is a nonempty
$d$-irreducible class $\cG\subseteq\supp(\mu)$
such that
\[
                         \mu(\cG)\geq\frac{1}{(d+1)^d}.
\]
\end{lemma}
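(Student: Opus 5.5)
We construct $\cG$ by an iterative restriction procedure that decreases the Littlestone dimension at every step, while losing only a factor of $d+1$ in $\mu$-mass each time. Set $\cG_0=\supp(\mu)$, a nonempty finite subclass of $\cH$, so $\ldim(\cG_0)\le d$ and $\mu(\cG_0)=1$. Given a current class $\cG_j\subseteq\supp(\mu)$, stop if it is $d$-irreducible. Otherwise, by \cref{def:irreducibility} there are points $x_1,\dots,x_d$ such that
\[
 \cG_j' \coloneqq \cG_j|_{(x_1,\SOA_{\cG_j}(x_1)),\ldots,(x_d,\SOA_{\cG_j}(x_d))}
\]
has $\ldim(\cG_j')<\ldim(\cG_j)$.

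The key observation is that $\cG_j$ is covered by $d+1$ pieces, each of strictly smaller Littlestone dimension. Every $h\in\cG_j$ either agrees with $\SOA_{\cG_j}$ on all of $x_1,\dots,x_d$, in which case $h\in\cG_j'$, or it has a first index $i$ with $h(x_i)=-\SOA_{\cG_j}(x_i)$. In the latter case $h\in\cG_j|_{(x_i,-\SOA_{\cG_j}(x_i))}$, and this class has dimension $<\ldim(\cG_j)$ by \eqref{eq:opposite-drops}. So $\cG_j$ is the union of $\cG_j'$ and the $d$ opposite-branch classes $\cG_j|_{(x_i,-\SOA_{\cG_j}(x_i))}$, $i\in[d]$. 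By pigeonhole one of these $d+1$ classes has $\mu$-mass at least $\mu(\cG_j)/(d+1)$. Let $\cG_{j+1}$ be such a heaviest piece; choosing deterministically if needed makes this a fixed selection rule, which \cref{lem:transfer} will later require.

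This piece is nonempty, since it has positive mass and lies in $\supp(\mu)$. It satisfies $\ldim(\cG_{j+1})\le\ldim(\cG_j)-1$ and $\mu(\cG_{j+1})\ge\mu(\cG_j)/(d+1)$.

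For termination, the dimension starts at most $d$ and drops by at least one per step. It can never go negative: a nonempty class has $\ldim\ge 0$, and a nonempty class of Littlestone dimension $0$ is automatically $d$-irreducible, because every restriction to SOA labels is nonempty and hence still has dimension $0$. Indeed, when $\ldim(\cG)=0$ all members agree on every point, since otherwise $\cG$ would shatter a depth-one tree. So $\SOA_{\cG}(x)$ equals that common value and restricting to it removes nothing. The procedure therefore halts after some number $t\le d$ of reduction steps at a nonempty $d$-irreducible $\cG_t\subseteq\supp(\mu)$ with $\mu(\cG_t)\ge (d+1)^{-t}\ge (d+1)^{-d}$.

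The main point needing care is verifying that the $d+1$ pieces really cover $\cG_j$ and each drop dimension. The first-disagreement decomposition handles the covering, and \eqref{eq:opposite-drops} together with the reducibility witness handles the drops. The exponent $d$, rather than $d+1$, comes from counting reduction steps: there are at most $d$ of them, because a dimension-$0$ class never needs to be reduced.
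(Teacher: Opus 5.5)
Your proposal is correct and follows essentially the same argument as the paper. Both start from $\supp(\mu)$, cover a reducible class by the $d$ opposite-branch restrictions together with the all-SOA restriction, pass to a heaviest piece, and bound the number of steps by $d$ because the dimension drops each time and dimension-zero classes are trivially irreducible; your first-disagreement justification of the cover and the remark about making the choice deterministic are welcome details, not a change of route.
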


\begin{proof}
Start with $\cG_0=\supp(\mu)$. If $\cG_0$ is $d$-irreducible, stop. Otherwise,
let $x_1,\ldots,x_d$ witness its $d$-reducibility and put
$f=\SOA_{\cG_0}$. The $d+1$ classes
\[
 \cG_0|_{(x_i,-f(x_i))}\quad (1\leq i\leq d),
 \qquad
 \cG_0|_{(x_1,f(x_1)),\ldots,(x_d,f(x_d))}
\]
cover $\cG_0$, and every member has Littlestone dimension strictly
smaller than $\ldim(\cG_0)$.

At least one member of the cover has $\mu$-mass at least
$\mu(\cG_0)/(d+1)$. Replace $\cG_0$ by a heaviest such member, denote it by $\cG_1$ and continue.
The Littlestone dimension drops at every step, so there are at most $d$
steps. The final class is $d$-irreducible and retains mass at least
$(d+1)^{-d}$. Notice that if the descent reaches dimension zero, the final
class is automatically a singleton and is trivially $d$-irreducible.
\end{proof}

\subsection{Finishing the proof}
Induct on the Littlestone dimension. A class with Littlestone dimension zero has
one hypothesis, so its list replicability number is $1$. For the inductive step, put
$N=(d+1)^d$ and $\widetilde{\cH}=\widetilde{\cH}_d$. By induction and \cref{lem:positive-closure}, we have $\LR(\widetilde{\cH})\le (d+1)^{d-1}$.

Given a finitely supported distribution $\mu$ on $\cH$, choose $\cG \subseteq \supp \mu$ from \cref{lem:heavy-component}. If $\ldim(\cG)=0$, then $\cG$ is a single hypothesis with mass at least $1/N$. Otherwise, $\ldim(\cG)\geq 1$ and so $\SOA_{\cG}\in \widetilde{\cH}$. By \cref{lem:soa-accuracy}, we have 
\[
        \loss_D(\SOA_{\cG})
        \leq 2^d\max_{g\in\cG}\loss_D(g)\leq 2^d\max_{g\in \supp(\mu)}\loss_D(g) \quad \text{for any }D.
\]

By \cref{lem:transfer}, we conclude
\[\LR(\cH)\leq N+\LR(\widetilde{\cH})\leq (d+1)^d+(d+1)^{d-1}\leq (d+2)^d.
\]

For the sample complexity suppose we need $s_d(\epsilon,\delta)$ samples for a class with Littlestone dimension $d$. Then looking at the definitions of $n,k, m$ and $\alpha$ in the proof of \cref{lem:transfer}, we can pick
\[m=s_{d-1}(\eps/2,\delta/4),\quad k=O((d+1)^{4d}\log(1/\delta)),\quad 1/\alpha=O(2^d(1/\eps+m/\delta)),\]
\[n=O\left(\frac{d\log (1/\alpha)+\log(k/\delta)}{\alpha}\right)\]
and obtain the recurrence
\[s_{d}(\epsilon,\delta)\leq nk+m.\]
An elementary though tedious calculation gives the bound
\[s_{d}(\epsilon,\delta)=\frac{2^{O(d^2 \log d)}}{\eps^2\delta^{d+1}}.\]

\section{Proof of \cref{thm:main2}}\label{sec:thm_main2}

Before proceeding, we introduce several combinatorial lemmas regarding Littlestone classes, drawing inspiration from stability theory—specifically, stable regularity. 

\subsection{Goodness and truth-value hypotheses}

\begin{definition}[{\cite[Definition~3.3]{MM24},~\cite[Definition 4.4]{malliaris2025epsilon}}]
\label{def:good-distribution}
Let $0<\eta<1/2$.  A countably supported probability distribution $\mu$
on $\cH$ is \emph{$\eta$-good} if, for every $x\in\cX$,
\[
 \mu\{h:h(x)=1\}\in[0,\eta]\cup[1-\eta,1].
\]
Its \emph{truth value} $t_\mu:\cX\to\{\pm 1\}$ is the unique hypothesis satisfying
\[
 \mu\{h:h(x)=t_\mu(x)\}\ge1-\eta
 \qquad\text{for every }x\in\cX.
\]
\end{definition}

The following lemma is a learning-theoretic interpretation of a result in stable regularity.

\begin{lemma}[{\cite[Fact 2.8]{MM24}},~\cite{malliaris2014regularity}]
\label{lem:weighted-good}
Suppose $\ldim(\cH)=d$.  For every countably supported distribution
$\mu$ on $\cH$ and every $0<\eta<1/2$, there is a set
$A\subseteq\supp \mu$ such that
\[
 \mu(A)\ge\eta^d
 \qquad\text{and}\qquad
 \mu(\,\cdot\mid A)\text{ is $\eta$-good}.
\]
\end{lemma}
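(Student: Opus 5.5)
The plan is a greedy descent on Littlestone dimension, in the same spirit as \cref{lem:heavy-component}, except that we split on a single "unbalanced" point at each step instead of $d$ points. We assume $\mu$ is nontrivial; the cases $d=0$ or a singleton support are immediate, since a point mass is $\eta$-good.

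Set $A_0=\supp\mu$. At step $i$, consider the conditional distribution $\mu(\cdot\mid A_i)$.
\begin{itemize}
\item If it is $\eta$-good, stop and output $A=A_i$.
\item Otherwise there is a point $x\in\cX$ with
\[
\mu\{h\in A_i: h(x)=1\}\in\bigl(\eta\,\mu(A_i),(1-\eta)\mu(A_i)\bigr).
\]
So both restrictions $A_i|_{(x,1)}$ and $A_i|_{(x,-1)}$ carry mass strictly greater than $\eta\,\mu(A_i)$.
\end{itemize}
By \eqref{eq:opposite-drops}, applied to the class $A_i$, one of the two restrictions has Littlestone dimension strictly less than $\ldim(A_i)$, namely the branch opposite to $\SOA_{A_i}(x)$. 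Put
\[
A_{i+1}=A_i|_{(x,-\SOA_{A_i}(x))}.
\]
Then $\mu(A_{i+1})>\eta\,\mu(A_i)$ and $\ldim(A_{i+1})\le\ldim(A_i)-1$.

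Since $\ldim(A_0)\le d$ and the dimension drops at every step, the process makes at most $d$ splits. If it ever reaches a class of dimension $0$, that class is a single hypothesis, because a class with two distinct concepts shatters a depth-one tree. The conditional distribution on a singleton is trivially $\eta$-good, so the process must stop there.

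At termination we have $A\subseteq\supp\mu$ with $\mu(A)\ge\eta^{d}\mu(A_0)=\eta^d$, and $\mu(\cdot\mid A)$ is $\eta$-good, as required. Countable support causes no issue: each step only uses \eqref{eq:opposite-drops} for the subclass $A_i$ together with finite additivity of $\mu$.

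The main obstacle is the choice of branch. The naive choice would be the heavier branch, but that gives mass at least $1/2$ with no guarantee that the dimension drops, so there is no bound on the number of steps. The key point is that non-goodness makes both branches heavy, with mass more than $\eta$ times the current mass. We are therefore free to choose the dimension-dropping SOA-opposite branch and still keep a factor $\eta$ of the mass. This is exactly why the bound is $\eta^d$ rather than something like $2^{-d}$.
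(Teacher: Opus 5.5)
Your proof is correct and is essentially the paper's argument. Both start from $\supp\mu$, split on a point witnessing non-goodness, and keep the branch whose Littlestone dimension drops; that branch carries more than an $\eta$ fraction of the current mass because non-goodness makes both branches heavy. After at most $d$ steps the mass is still at least $\eta^d$. Your explicit use of \eqref{eq:opposite-drops} to pick the SOA-opposite branch, and your remark that a nonempty class of dimension zero is a singleton, spell out details the paper states only briefly.
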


\begin{proof}
Start with $A_0=\supp\mu$.  If $\mu(\,\cdot\mid A_0)$ is not
$\eta$-good, some $x\in\cX$ splits $A_0$ into its two label restrictions,
each having conditional mass greater than $\eta$.  At least one restriction
has Littlestone dimension strictly less than $\ldim(A_0)$.
Replace $A_0$ by such a restriction and repeat. There can be at most $d$ drops in Littlestone dimension, so the final set $A$ has mass at least $\eta^d$.
\end{proof}

  The second ingredient we will be using in the proof of \cref{thm:main2} states the truth value of a $1/4$-good distribution can be recovered
\emph{exactly} by sampling $O(\VCdim^*(\cH))$ hypotheses.

\begin{lemma}
\label{lem:exact-majority}Let $\cH$ be a concept class with dual VC dimension $\VCdim^*(\cH)\geq 1$. There exists an integer $m=O(\VCdim^*(\cH))$
such that for every countably supported $1/4$-good distribution
$\mu$ on $\cH$ we have
\[
 \Pr_{h_1,\ldots,h_m\sim\mu}
 \bigl[\Maj(h_1,\ldots,h_m)=t_\mu\bigr]\ge\frac12.
\]
\end{lemma}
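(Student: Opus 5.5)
The plan is to apply the uniform convergence theorem to the dual class $\cH^*$, read as a class of functions on the domain $\cH$ indexed by points $x\in\cX$. For each $x\in\cX$, let $E_x=\{h\in\cH: h(x)\neq t_\mu(x)\}$. These sets $E_x$ are exactly the sets of the form $\{h: h(x)=b\}$ with $b=-t_\mu(x)$. The family $\{\{h:h(x)=1\}\}_{x}$ has VC dimension $v^*$, and its complements also have VC dimension $v^*$. Hence the family $\{E_x\}_{x\in\cX}$ is contained in the union of two classes of VC dimension $v^*$. By Sauer--Shelah, its growth function is at most $2(em/v^*)^{v^*}$, so it has VC dimension $O(v^*)$. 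In fact it suffices to work with each of the two classes separately and take a union bound.

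Since $\mu$ is $1/4$-good, we have $\mu(E_x)\le 1/4$ for every $x$. The majority $\Maj(h_1,\ldots,h_m)$ fails to equal $t_\mu$ at $x$ only if the empirical frequency $\widehat{\mu}(E_x)=\frac1m\#\{i:h_i\in E_x\}$ is at least $1/2$. (Take $m$ odd to avoid ties, or break ties by a fixed convention and require strict inequality.) It therefore suffices to show that
\[
\Pr_{h_1,\ldots,h_m\sim\mu}\Bigl[\sup_{x\in\cX}\bigl|\widehat{\mu}(E_x)-\mu(E_x)\bigr|<\tfrac14\Bigr]\ge\tfrac12 .
\]
This is the standard VC uniform convergence bound with accuracy $1/4$ and confidence $1/2$. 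It needs $m=O(\mathrm{VC}+\log 2)/(1/4)^2=O(v^*)$ samples, where the implied constant is absolute. Uniform convergence holds for any distribution on the domain $\cH$. Since $\mu$ is countably supported, there are no measurability issues, and the supremum over the countable set $\cX$ is a measurable event.

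The main obstacle is the bookkeeping that turns the adaptive labels $t_\mu(x)$ into a fixed set system of controlled VC dimension. This works because $t_\mu$ is determined by $\mu$ before sampling, so $\{E_x\}$ is a fixed family. Each $E_x$ belongs either to $\{\{h:h(x)=1\}\}_x$ or to $\{\{h:h(x)=-1\}\}_x$, and both families have VC dimension exactly $v^*$ because complementation preserves shattering. Applying uniform convergence to each family with confidence $1/4$ and taking a union bound therefore gives the claim with $m=O(v^*)$. The hypothesis $v^*\ge1$ is used only to absorb the additive constants into $O(v^*)$.
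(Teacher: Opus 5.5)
Your proposal is correct and is essentially the paper's argument: uniform convergence for the dual class $\cH^*$ at constant accuracy with confidence $1/2$, after which $1/4$-goodness forces the empirical majority to agree with $t_\mu$ at every $x$ simultaneously (the paper uses deviation $1/8$ on the frequency of $\{h:h(x)=+1\}$). Your split into two families with a union bound is harmless but unnecessary, since $|\widehat{\mu}(E_x)-\mu(E_x)|$ already equals the deviation of the frequency of $\{h:h(x)=+1\}$.
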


\begin{proof}
The standard VC uniform-convergence theorem, applied to the dual class
$\cH^*$, gives an $m=O(\VCdim^*{(\cH)})$ such that, with probability at least
$1/2$,
\[
 \sup_{x\in\cX}
 \left|
 \frac{1}{m}\sum_{i=1}^m\mathbbm{1}\{h_i(x)=+1\}
       -\mu\{h:h(x)=+1\}
 \right|\le\frac18.
\]
At every $x$, the frequency of $\mu$ is in the union $[0,1/4]\cup[3/4,1]$. On this event the empirical frequency therefore lies in the union $[0,3/8]\cup[5/8,1]$. Thus, every empirical
majority agrees with the truth value $t_\mu$, simultaneously for all $x\in\cX$.
\end{proof}

\subsection{A globally stable learner}
Fix $0<\eps<1$ and a deterministic proper $\operatorname{ERM}$ algorithm ERM$(\cdot)$ for $\cH$ with sample complexity $n$ such that
\begin{equation}\label{eq:dual-erm-success}
 \Pr_{S\sim D^n}
 \bigl[\loss_D(\operatorname{ERM}(S))\le\eps/2\bigr]
 \ge\frac34
\end{equation}
for every realizable $D$. Fix $m=O(\VCdim^*(\cH))$ as in \cref{lem:exact-majority}. Define

\begin{center}
 \fbox{\begin{minipage}{0.86\linewidth}
 \textbf{The majority learner $\cA_{\epsilon}$.}
 Sample $mn$ examples and collect them into $m$ independent blocks $S_1,\ldots,S_m$, each of
 size $n$.  Return
 \[
   \Maj\bigl(\operatorname{ERM}(S_1),\ldots,
             \operatorname{ERM}(S_m)\bigr).
 \]
 \end{minipage}}
\end{center}

\begin{proof}[Proof of \cref{thm:main2}.]
 We show the majority learner $\cA_{\epsilon}$ is ($\epsilon,p$)-globally stable with $p=1/2^{O(dv^*)}$. Then an application of \cref{thm:stability-list} finishes the proof.

 Fix a realizable distribution $D$, and let
$\mu$ be the output distribution of one ERM run.  Note $\supp (\mu)$ is countable. Let
\[
 G=\{h\in\supp\mu:\loss_D(h)\le\eps/2\},
 \qquad \lambda=\mu(\,\cdot\mid G).
\]
By~\eqref{eq:dual-erm-success}, $\mu(G)\ge3/4$.
Apply \cref{lem:weighted-good} with $\eta=1/4$ to $\lambda$.
It supplies $A\subseteq G$ such that
\begin{equation}\label{eq:dual-heavy-good}
 \mu(A)\ge\frac34\,4^{-d},
 \qquad \nu:=\mu(\,\cdot\mid A)\text{ is $1/4$-good}.
\end{equation}
Now the truth value hypothesis $t_\nu$ is an accurate hypothesis: whenever $t_{\nu}(x)\ne y$,
at least $3/4$ of the $\nu$-mass also predicts a label different from
$y$.  Integrating this pointwise inequality gives
\[
 \frac34\loss_D(t_{\nu})
 \le \mathbb{E}_{h\sim\nu}\loss_D(h)
 \le\frac{\eps}{2}.
\]
In particular, $\loss_D(t_{\nu})\le2\eps/3<\eps$, so we have shown accuracy for our algorithm.

The $m$ independent ERM outputs all belong to $A$ with probability
$\mu(A)^m$. Therefore,  \cref{lem:exact-majority} and
\eqref{eq:dual-heavy-good} imply
\[
 \Pr_{S\sim D^{nm}}[\cA_\eps(S)=t_{\nu}]
 \ge\frac{1}{2}\mu(A)^m
 \ge\frac12
    \left(\frac34\,4^{-d}\right)^m
 =2^{-O(dv^*)}.
\]
This completes the stability part of the proof.

Finally, it is not hard to compute the sample complexity of our globally stable algorithm as it equals the sample complexity of ERM multiplied by $m=O(v^*)$.  
\end{proof}

\bibliographystyle{alphaurl}
\bibliography{ref}

@inproceedings{CMY23,
  author    = {Zachary Chase and Shay Moran and Amir Yehudayoff},
  title     = {Stability and Replicability in Learning},
  booktitle = {2023 IEEE 64th Annual Symposium on Foundations of Computer Science (FOCS)},
  pages     = {2430--2439},
  year      = {2023},
  doi       = {10.1109/FOCS57990.2023.00148},
  url       = {https://doi.org/10.1109/FOCS57990.2023.00148}
}

@inproceedings{GGKM21,
  author    = {Badih Ghazi and Noah Golowich and Ravi Kumar and Pasin Manurangsi},
  title     = {Sample-Efficient Proper {PAC} Learning with Approximate Differential Privacy},
  booktitle = {Proceedings of the 53rd Annual ACM SIGACT Symposium on Theory of Computing (STOC)},
  pages     = {183--196},
  publisher = {Association for Computing Machinery},
  year      = {2021},
  doi       = {10.1145/3406325.3451028},
  url       = {https://doi.org/10.1145/3406325.3451028}
}

@inproceedings{Lyu25,
author = {Lyu, Xin},
title = {Private Learning of Littlestone Classes, Revisited},
year = {2026},
isbn = {9798400725364},
publisher = {Association for Computing Machinery},
address = {New York, NY, USA},
url = {https://doi.org/10.1145/3798129.3800924},
doi = {10.1145/3798129.3800924},
booktitle = {Proceedings of the 58th Annual ACM Symposium on Theory of Computing},
pages = {2218–2229},
numpages = {12},
location = {Salt Lake City, UT, USA},
series = {STOC '26}
}

@article{MM24,
  author  = {Maryanthe Malliaris and Shay Moran},
  title   = {Model Theory and Agnostic Online Learning via Excellent Sets},
  journal = {Transactions of the American Mathematical Society},
  volume  = {377},
  number  = {11},
  pages   = {7753--7776},
  year    = {2024},
  doi     = {10.1090/tran/9235},
  url     = {https://doi.org/10.1090/tran/9235}
}

@INPROCEEDINGS{BLM20,
  author={Bun, Mark and Livni, Roi and Moran, Shay},
  booktitle={IEEE 61st Annual Symposium on Foundations of Computer Science (FOCS)}, 
  title={An Equivalence Between Private Classification and Online Prediction}, 
  year={2020},
  volume={},
  number={},
  pages={389-402},
  doi = {10.1109/FOCS46700.2020.00044}
}

@inproceedings{dixon2023listandcertificate,
    author = {Dixon, Peter and Pavan, A. and Vander Woude, Jason and Vinodchandran, N. V.},
    title = {List and certificate complexities in replicable learning},
    year = {2023},
    publisher = {Curran Associates Inc.},
    address = {Red Hook, NY, USA},
    booktitle = {Proceedings of the 37th International Conference on Neural Information Processing Systems},
    articleno = {1342},
    numpages = {23},
    location = {New Orleans, LA, USA},
    series = {NeurIPS '23}
}

@article{malliaris2025epsilon,
  title={Epsilon-saturation for stable graphs and Littlestone classes},
  author={Malliaris, Maryanthe and del Campo, Olga Medrano Mart{\'\i}n and Moran, Shay},
  journal={arXiv preprint arXiv:2508.21807},
  year={2025},
  doi={10.48550/arXiv.2508.21807}
}

@article{malliaris2014regularity,
  title={Regularity lemmas for stable graphs},
  author={Malliaris, Maryanthe and Shelah, Saharon},
  journal={Transactions of the American Mathematical Society},
  volume={366},
  number={3},
  pages={1551--1585},
  year={2014},
  doi={10.1090/S0002-9947-2013-05820-5}
}

@inproceedings{chase_local_2024,
	address = {New York, NY, USA},
	series = {{STOC} 2024},
	title = {Local {Borsuk}-{Ulam}, {Stability}, and {Replicability}},
	isbn = {979-8-4007-0383-6},
	doi = {10.1145/3618260.3649632},
	urldate = {2026-06-17},
	booktitle = {Proceedings of the 56th {Annual} {ACM} {Symposium} on {Theory} of {Computing}},
	publisher = {Association for Computing Machinery},
	author = {Chase, Zachary and Chornomaz, Bogdan and Moran, Shay and Yehudayoff, Amir},
	month = jun,
	year = {2024},
	pages = {1769--1780},
}

@article{MALLIARIS2025103633,
title = {The unstable formula theorem revisited via algorithms},
journal = {Annals of Pure and Applied Logic},
volume = {176},
number = {10},
pages = {103633},
year = {2025},
issn = {0168-0072},
doi = {https://doi.org/10.1016/j.apal.2025.103633},
author = {M. Malliaris and S. Moran}
}

@inproceedings{blondal2026borsuk,
	address = {New York, NY, USA},
	series = {{STOC} '26},
	title = {Borsuk-{Ulam} and {Replicable} {Learning} of {Large}-{Margin} {Halfspaces}},
	isbn = {979-8-4007-2536-4},
	doi = {10.1145/3798129.3800771},
	urldate = {2026-06-10},
	booktitle = {Proceedings of the 58th {Annual} {ACM} {Symposium} on {Theory} of {Computing}},
	publisher = {Association for Computing Machinery},
	author = {Blondal, Ari and Hatami, Hamed and Hatami, Pooya and Lalov, Chavdar and Tretiak, Sivan},
	month = jun,
	year = {2026},
	pages = {529--540},
}

@article{alon2022private,
  title={Private and online learnability are equivalent},
  author={Alon, Noga and Bun, Mark and Livni, Roi and Malliaris, Maryanthe and Moran, Shay},
  journal={ACM Journal of the ACM (JACM)},
  volume={69},
  number={4},
  pages={1--34},
  year={2022},
  publisher={ACM, New York, NY},
   doi={10.1145/3526074}
}

@misc{hopkins2026approximate,
    title={Approximate Replicability in Learning}, 
    author={Max Hopkins and Russell Impagliazzo and Christopher Ye},
    year={2026},
    eprint={2510.20200},
    archivePrefix={arXiv},
    primaryClass={cs.LG},
    doi={10.48550/arXiv.2510.20200}
}

@inproceedings{impagliazzo2023reproducibility,
    author = {Impagliazzo, Russell and Lei, Rex and Pitassi, Toniann and Sorrell, Jessica},
    title = {Reproducibility in learning},
    year = {2022},
    isbn = {9781450392648},
    url = {https://doi.org/10.1145/3519935.3519973},
    doi = {10.1145/3519935.3519973},
    booktitle = {Proceedings of the 54th Annual ACM SIGACT Symposium on Theory of Computing},
    pages = {818–831},
    numpages = {14},
    series = {STOC 2022}
}

@inproceedings{karbasi2023reinforcement,
    author = {Karbasi, Amin and Velegkas, Grigoris and Yang, Lin and Zhou, Felix},
    booktitle = {Advances in Neural Information Processing Systems},
    doi = {10.52202/075280-3265},
    pages = {74702--74735},
    title = {Replicability in Reinforcement Learning},
    volume = {36},
    year = {2023}
}

@inproceedings{moran2023zoo,
    author = {Moran, Shay and Schefler, Hilla and Shafer, Jonathan},
    title = {The Bayesian stability zoo},
    year = {2023},
    booktitle = {Proceedings of the 37th International Conference on Neural Information Processing Systems},
    articleno = {2697},
    numpages = {22},
    location = {New Orleans, LA, USA},
    series = {NeurIPS '23},
    doi={10.52202/075280-2697}
}

@article{moran2026nearly,
  title={Nearly Linear Private Learning of Littlestone Classes},
  author={Moran, Shay and Yan, Chao},
  journal={arXiv preprint arXiv:2610.23160},
  year={2026},
  doi={10.48550/arXiv.2610.23160}
}

@inproceedings{blondal_tight_2026,
	title = {Tight list replicability bounds via a novel sphere covering theorem},
	issn = {2640-3498},
	url = {https://proceedings.mlr.press/v336/blondal26a.html},
	language = {en},
	urldate = {2026-10-02},
	booktitle = {Proceedings of {Thirty} {Ninth} {Conference} on {Learning} {Theory}},
	publisher = {PMLR},
	author = {Blondal, Ari and Hatami, Hamed and Hatami, Pooya and Lalov, Chavdar and Tretiak, Sivan},
	month = jun,
	year = {2026},
	pages = {791--807},
}

@InProceedings{hopkins25a,
  title = 	 {The Role of Randomness in Stability},
  author =       {Hopkins, Max and Moran, Shay},
  booktitle = 	 {Proceedings of the 42nd International Conference on Machine Learning},
  pages = 	 {23805--23827},
  year = 	 {2025},
  editor = 	 {Singh, Aarti and Fazel, Maryam and Hsu, Daniel and Lacoste-Julien, Simon and Berkenkamp, Felix and Maharaj, Tegan and Wagstaff, Kiri and Zhu, Jerry},
  volume = 	 {267},
  series = 	 {Proceedings of Machine Learning Research},
  month = 	 {13--19 Jul},
  publisher =    {PMLR},
  url = 	 {https://proceedings.mlr.press/v267/hopkins25a.html}
}

@InProceedings{blondal2026simplicial,
    author =	{Blondal, Ari and Hatami, Hamed and Hatami, Pooya and Lalov, Chavdar and Tretiak, Sivan},
    title =	{Simplicial Covering Dimension of Extremal Concept Classes},
    booktitle =	{17th Innovations in Theoretical Computer Science Conference (ITCS 2026)},
    pages =	{22:1--22:24},
    year =	{2026},
    volume =	{362},
    URN =		{urn:nbn:de:0030-drops-253094},
    doi =		{10.4230/LIPIcs.ITCS.2026.22},
}

@misc{blondal_sign-rank_2026,
	title = {Sign-{Rank}, {Index}, and {List} {Replicability}: {Connections} and {Separations}},
	shorttitle = {Sign-{Rank}, {Index}, and {List} {Replicability}},
	doi = {10.48550/arXiv.2606.18236},
	urldate = {2026-06-17},
	publisher = {arXiv},
	author = {Blondal, Ari and Hatami, Hamed and Hatami, Pooya and Lalov, Chavdar and Tretiak, Sivan},
	month = jun,
	year = {2026},
	note = {arXiv:2606.18236 [cs.LG]},
}

@misc{chornomaz2025spherical,
    title={Spherical dimension}, 
    author={Bogdan Chornomaz and Shay Moran and Tom Waknine},
    year={2025},
    eprint={2503.10240},
    archivePrefix={arXiv},
    primaryClass={cs.DM},
    doi={10.48550/arXiv.2503.10240} 
}

\end{document}